\theoremstyle{plain}
\newtheorem{theorem}{Theorem}[section]
\newtheorem{lemma}[theorem]{Lemma}
\theoremstyle{definition}
\newtheorem{definition}[theorem]{Definition}
\theoremstyle{remark}
\newtheorem{remark}[theorem]{Remark}
\def\int{\displaystyle\mathop {\mbox{\rm int}}}    
\def\argmax{\displaystyle\mathop {\mbox{argmax}}}
\def\argmax{\displaystyle\mathop {\mbox{\rm argmax}}}
\def\real{\mathbb R}
\newcommand{\bbi}{{\bf I}}
\newcommand{\bbp}{{\bf P}}
\newcommand{\bbo}{{\bf O}}
\newcommand{\bba}{{\bf A}}
\newcommand{\bbb}{{\bf B}}
\newcommand{\bbq}{{{\bf Q}}}
\newcommand{\bbr}{{{\bf R}}}
\newcommand{\bbs}{{\bf S}}
\newcommand{\bbu}{{\bf U}}
\newcommand{\bbh}{{\bf H}}
\newcommand{\bbg}{{\bf G}}
\newcommand{\bbv}{{\bf V}}
\newcommand{\bbx}{{\bf X}}
\newcommand{\bby}{{\bf Y}}
\newcommand{\bbc}{{\bf C}}
\newcommand{\bbw}{{\bf W}}
\newcommand{\bbm}{{\bf M}}
\newcommand*{\rom}[1]{\expandafter\@slowromancap\romannumeral #1@}
\title{SUMO: \underline{Su}bspace-Aware \underline{M}oment-\underline{O}rthogonalization for Accelerating Memory-Efficient LLM Training}
\author{%
  Yehonathan Refael \\
  Faculty of Engineering \\
  Tel Aviv University \\
  \texttt{yehonathan@tau.ac.il} \\
  \And
  Guy Smorodinsky \\
  Department of Computer science \\
  Ben Gurion University \\
  \texttt{smorodin@post.bgu.ac.il} \\
    \And
  Tom Tirer \\
  Faculty of Engineering \\
  Bar-Ilan University \\
  \texttt{tirer.tom@biu.ac.il} \\
  \And
  Ofir Lindenbaum \\
  Faculty of Engineering \\
  Bar-Ilan University \\
  \texttt{ofir.lindenbaum@biu.ac.il} \\
}
\begin{document}

\tcbset{
  myhighlight/.style={
    colback=blue!5!white, 
    colframe=blue!5!white, 
    sharp corners,
    boxrule=0pt,
    left=1mm,
    right=1mm,
    top=1mm,
    bottom=1mm,
  }
}
\maketitle

\begin{abstract}
Low-rank gradient-based optimization methods have significantly improved memory efficiency during the training of large language models (LLMs), enabling operations within constrained hardware without sacrificing performance. However, these methods primarily emphasize memory savings, often overlooking potential acceleration in convergence due to their reliance on standard isotropic steepest descent techniques, which can perform suboptimally in the highly anisotropic landscapes typical of deep networks, particularly LLMs. In this paper, we propose SUMO (\underline{Su}bspace-Aware \underline{M}oment-\underline{O}rthogonalization), an optimizer that employs exact singular value decomposition (SVD) for moment orthogonalization within a dynamically adapted low-dimensional subspace, enabling norm-inducing steepest descent optimization steps. By explicitly aligning optimization steps with the spectral characteristics of the loss landscape, SUMO effectively mitigates approximation errors associated with commonly used methods, such as the Newton-Schulz orthogonalization approximation. We theoretically establish an upper bound on these approximation errors, proving their dependence on the condition numbers of moments, conditions we analytically demonstrate are encountered during LLM training. Furthermore, we both theoretically and empirically illustrate that exact orthogonalization via SVD substantially improves convergence rates while reducing overall complexity. Empirical evaluations confirm that SUMO accelerates convergence, enhances stability, improves performance, and reduces memory requirements by up to 20\% compared to state-of-the-art methods.
\end{abstract}
\section{Introduction}

Low-rank gradient-based optimization methods have become powerful tools for reducing memory consumption during the pre-training and fine-tuning of large language models (LLMs), often without sacrificing performance, and sometimes even improving it. For instance, while pre-training LLaMA 7B typically requires around 58GB of memory, far exceeding the 24GB available on consumer GPUs like RTX 4090, recent advances, such as those discussed in \cite{zhao2024galore,refael2025adarankgrad,zhu2024apollo}, have demonstrated that LLaMA 7B can now be trained from scratch on a single 24GB GPU without the need for costly memory offloading. The theoretical analysis in \cite{zhao2024galore} attributes this efficiency to the inherent low-rank structure of the gradients, which enables optimization in a significantly reduced latent space. Furthermore, \cite{refael2025adarankgrad} found a consistent decrease in gradient rank throughout training, suggesting that low-rank optimization not only reduces memory usage but also converges toward increasingly compact subspaces.

However, despite these advancements, existing methods primarily focus on memory savings and often overlook the potential to accelerate convergence. Current approaches typically rely on standard steepest descent methods and assume isotropic geometry, which can hinder efficiency in ill-conditioned settings. This observation motivates our primary objective: to develop a subspace-aware optimizer that leverages low-rank structure while adapting to the geometry of the loss landscape. By reevaluating the choice of norm and its influence on gradient descent dynamics, we aim to design an algorithm that improves generalization, accelerates convergence,  while preserving the memory advantages of low-rank methods.

Classical gradient descent, including SGD \cite{battash2024revisiting}, performs steepest descent under the Euclidean norm, which reflects isotropic curvature. However, deep networks exhibit highly anisotropic loss landscapes, making this assumption suboptimal. Recent work shows that adaptive optimizers, such as Shampoo \citep{gupta2018shampoopreconditionedstochastictensor}, SOAP \citep{vyas2025soap}, and Muon \citep{jordan2024muon}, can be interpreted as steepest descent under non-Euclidean norms tailored to the network architecture and data structure. As shown in \cite{muoncase2024cesista}, these methods implicitly adapt to spectral or operator norms, which better capture local curvature and improve convergence. This motivates the design of subspace-aware optimizers that exploit both low-rank structure and appropriate geometry to accelerate training.


To formalize the role of geometry in optimization, consider a neural network with a differentiable loss function \( \mathcal{L}: \mathcal{W} \to \mathbb{R} \) defined on a weight space \(\mathcal{W} = \mathbb{R}^n\). The local behavior around a point \( \mathbf{w} \) can be approximated by the Taylor expansion,
$\mathcal{L}(\mathbf{w} + \Delta \mathbf{w}) \approx \mathcal{L}(\mathbf{w}) + \mathbf{g}^{\top} \Delta \mathbf{w} + \frac{\lambda}{2}\|\Delta \mathbf{w}\|^2,$
where $\mathbf{g}= \nabla_{\mathbf{w}} \mathcal{L}(\mathbf{w})$, \(\lambda > 0\) captures the sharpness or curvature of the loss surface and \(\|\cdot\|\) is a chosen norm reflecting the geometry of the optimization landscape. Minimizing this approximation corresponds precisely to performing steepest descent under the given norm constraint. According to \cite{bernstein2024old}, the solution to this minimization explicitly takes the form,
\[
\Delta \mathbf{w} = -\frac{\|\mathbf{g}\|_*}{\lambda} \argmax_{\mathbf{t}:\|\mathbf{t}\|=1} \mathbf{g}^{\top}\mathbf{t},
\]
where \(\|\cdot\|_*\) denotes the dual norm of \(\|\cdot\|\). Adaptive optimizers differ primarily in their norm choices.
Adam utilizes a dynamic Max-of-Max norm constraint.
Recent optimizers consider matrix norms while applying steepest descent at the layer level.
Muon imposes a fixed Schatten-\(p\) norm constraint for large \(p\), effectively using the spectral norm on weight matrices \cite{kovalev2025,jordan2024muon}. 
Shampoo \citep{gupta2018shampoopreconditionedstochastictensor} dynamically learns the optimal approximate Schatten-\(p\) norm for steepest descent, with its variants like SOAP \citep{sun2023adasamboostingsharpnessawareminimization} apply momentum to efficiently navigate the space of possible norms. Muon, by contrast, operates within a relatively fixed but large Schatten-\(p\) norm, striking a balance between the dynamic adaptability of Shampoo and the static spectral norm constraints. Since neural network weights locally act as linear operators on Euclidean spaces, the induced operator (spectral) norm provides a natural constraint that aligns with the curvature of the loss surface. This perspective motivates gradient orthogonalization, which ensures that optimization updates respect the spectral norm, thereby controlling perturbation magnitude and enhancing optimization stability and efficiency \cite{bernstein2024old}.

While norm-induced optimization methods offer a principled way to align updates with the geometry of the loss landscape, their practical deployment often incurs substantial computational overhead. For instance, Shampoo requires computing matrix inverses or root operations at every optimization step, which can be computationally expensive for large-scale neural networks. Similarly, Muon's first-order moments-orthogonalization, though effective, involves a costly approximation to spectral decompositions, computed by applying five iterations of Newton-Schulz \citep{higham1986newton} (referred to as Newton-Schulz5). Therefore, there is an inherent trade-off between the theoretical optimality provided by these norm-induced optimization approaches and their practical computational demands.


To bridge the gap between the geometric advantages of norm-induced methods and their computational costs, we first analyze the limitations of existing approximations. We derive an upper bound on the error introduced by the Newton-Schulz orthogonalization, demonstrating that this error increases with the condition number of the moment matrix. This finding explains the increasing instability of the Newton–Schulz5 method in ill-conditioned scenarios, which we subsequently demonstrate to occur in the first-order moment matrices during the training of large language models (LLMs). Building on this analysis, we establish a convergence rate for Muon optimization and compare it to an alternative method that replaces the Newton–Schulz approach with exact Singular Value Decomposition (SVD). Remarkably, we find that the SVD-based approach converges faster, with improvements directly proportional to the accumulated errors from the moments' orthogonalization by the Newton–Schulz5 method. Motivated by the empirical observation that gradients in LLMs often exhibit a low-rank structure, especially during early training, we propose a subspace-aware optimization scheme. This scheme performs exact SVD-based moment orthogonalization within a low-dimensional adaptive subspace. This approach benefits from the relatively low computational cost of SVD calculations for low-rank input matrices and enhances convergence stability. Also, our approach attains an even greater reduction in memory usage than all previous low-rank training methods by relying solely on the first-order moment, as detailed below in Table~\ref{tab:your_label}. We support our method with a theoretical convergence guarantee and validate its empirical benefits through experiments, demonstrating faster training and better model performance compared to existing methods.



\begin{table}[hbt]
\centering
\caption{Comparison of properties between SUMO, GaLore, Adam, Shampoo, and SOAP. Assume $\bbw \in \mathbb{R}^{m \times n}$ with $m \geq n$, a constant projection rank $r$ and a subspace update rate $K$.}
\resizebox{\textwidth}{!}{%
$\begin{array}{l|lllll}
\hline &\text{SUMO} & \text{Adam} & \text{Shampoo} & \text{SOAP} & \text{GaLore} \\
\hline 
\text{Computation}  & \fcolorbox{green}{white}{$O(m n r+m n^2/K)$} & O(m n) & O(m^3+n^3) &  O(m^3+n^3) & \fcolorbox{green}{white}{$O(m n r+m n^2/K)$} \\
\text{Optim.~states memory}  & 
\fcolorbox{green}{white}{$n r + m r$}
& 2 m n & m^2+n^2 & 2 m n+2 m^2+2 n^2 & 2 n r+m r \\
\text{Subspace-aware} &\textcolor{green}{\checkmark} & \textcolor{red}{\times} & \textcolor{red}{\times} & \textcolor{red}{\times} &  \textcolor{green}{\checkmark}\\
\text{Orthogonalization} &\textcolor{green}{\checkmark} & \textcolor{red}{\times} & \textcolor{red}{\times} & \textcolor{red}{\times} &  \textcolor{red}{\times}\\
\hline
\end{array}$
}
\label{tab:your_label}
\vspace{-10pt}
\end{table}

\section{Related Work}
\paragraph{Low-rank gradient optimization.}
Low-rank gradients naturally emerge during neural network training, as shown in both theoretical and empirical studies \cite{zhaoZerOInitializationInitializing2022, cossonLowRankGradientDescent2023, yang2023spectral}. This structure has been leveraged to reduce memory and computational costs during training \cite{gooneratneLowrankGradientApproximation2020, huangLowRankGradientDescent2023, modoranuErrorFeedbackCan2024}. Recent work \cite{refael2025adarankgrad} showed that gradients in reversible layers \cite{tian2021} tend to collapse to rank one over time and used this to adaptively adjust gradient rank in Adam. In this paper, we demonstrate that the same low-rank trend is present in the first-order moment, which we utilize to efficiently apply exact orthogonalization—avoiding the accumulation of approximation errors, such as those encountered in Newton-Schultz, during optimization.

\paragraph{Memory efficient optimizers.}
Reducing the memory demands of training large language models (LLMs) has driven extensive algorithmic research. One research direction, initiated by LoRA \cite{hu2021lora}, reduces the number of trainable parameters via low-rank adaptation. Yet, such methods often fall short of fully parameterized models, especially during pre-training. Another direction does not restrict the set of trainable parameters but instead optimizes the training methods, with notable examples including AdaRankGrad, GaLore, Fira, Flora, Adam-mini, GaLore-mini, LDAdam, GoLore, LoQT, and Apollo \cite{zhao2024galore, refael2025adarankgrad, zhu2024apollo, chen2024fira, Hao2024FloraLA, zhang2024adam, robert2025ldadama, loeschcke2024}, integrating low-rank gradient projections in optimization. In this work, we reduce memory usage even further by relying solely on a first-order momentum, as shown in Table \ref{tab:your_label}.

\paragraph{Gradient preconditioning.}
Preconditioning the Gradient method is critical in enhancing the efficiency and effectiveness of optimizers. Several notable approaches for using a preconditioner have emerged, including methods based on signed gradients~\citep{bernstein2018signsgd, crawshaw2022robustness, lion, kunstner2023noise}, gradient clipping~\citep{zhang2020adaptive}, normalization~\citep{zhang2020adaptive, you2019lamb}, and gradient whitening~\citep{jordan2024muon, yang2008principal, adam, hwang2024fadam, bernstein2024oldoptimizernewnorm, bernstein2024modulardualitydeeplearning, carlson2015preconditioned}. Recent studies~\citep{jordan2024muon, tuddenham2022orthogonalising} explored gradient-orthogonalization strategies, thereby speeding up training. Orthogonalizing gradients effectively constrains updates to lie on directions of uniform magnitude (spectral radius = 1), preventing updates from exaggerating specific gradient directions over others.
This procedure ensures a form of normalization that mitigates potential instabilities from ill-conditioned gradients. Unlike these methods, which apply preconditioning or approximate orthogonalization in the high-dimensional parameter space, our approach performs exact SVD-based orthogonalization within an adaptively selected low-rank subspace, offering improved stability and lower computational overhead.


\paragraph{Orthogonal Stochastic Gradient Descent with Momentum (OSGDM).}
OSGDM \cite{tuddenham2022orthogonalising} is a recently introduced first-order optimization method that speeds up neural network training by orthogonalizing gradients before the optimization step. Specifically, for a data batch $\xi^{(t)} $, OSGDM applies SVD to the gradient matrix $\bbg_l^{(t)}=\nabla_{\bbw_l} \mathcal{L}\left(\Phi\left(\xi^{(t)} ; \mathbf{\theta}\right)\right)$ of each neural network layer $l$ to generate an orthonormal gradient approximation $\bbo_l$. This ensures diversity among learned representations and reduces redundancy. The update rule for OSGDM with momentum term $\gamma$ and learning rate $\eta$ is defined as, $$\mathbf{O}_l^{(t)}=\operatorname{orth}\left(\bbg_l^{(t)}\right),\quad \bbm^{(t+1)}_l \leftarrow \gamma \bbm^{(t)}_l + \eta \mathbf{O}_l^{(t)}, \quad \bbw^{(t+1)}_l \leftarrow \bbw^{(t)}_l - \bbm^{(t+1)}_l,$$ where 
$\operatorname{orth}(\bbg)=\left(\bbg \bbg^{\top}\right)^{-1/2} \bbg $ is the ortogonalization operator, and $\bbm_l$ is the first order moment of layer $l$. Despite the additional computational overhead of SVD, OSGDM empirically converges faster and achieves higher accuracy than common methods such as Adam.

\paragraph{Muon optimizer.}
At iteration \(t\), given weight \(\mathbf{W}^{(t)}\), momentum \(\mu\), learning rate \(\eta_t\), and objective \(\mathcal{L}_t\), Muon, introduced by \cite{jordan2024muon}, constructs the update rule, $$\bbm^{(t)} = \mu \bbm^{(t-1)} + \bbg_l^{(t)},\quad
\mathbf{O}_l^{(t)}= \text{Newton-Schulz5}(\bbm^{(t)}),\quad
\mathbf{W}^{(t+1)}= \mathbf{W}^{(t)} - \eta^{(t)}\mathbf{O}_l^{(t)}.$$ Here, \(\bbm^{(t)}\) is the momentum at iteration \(t\), initialized as a zero matrix when \(t=0\). The Newton-Schulz5 method \cite{bernstein2024oldoptimizernewnorm} approximates \((\bbm^{(t)}{\bbm^{(t)}}^{\top})^{-1/2}\bbm^{(t)}\), orthogonalizing \(\bbm^{(t)}\) and thus ensuring uniform update directions, avoiding dominance by few directions. Muon explicitly controls the norm of gradient updates—particularly the spectral norm (or Schatten-$p$ norm with large $p$), which limits updates to smaller, well-conditioned steps in parameter space. By constraining the spectral norm, moment orthogonalization implicitly prevents overly large or ill-conditioned parameter updates. Such updates often lead to poor generalization due to instability or overfitting. Shortly after the introduction of Muon, the study in \cite{liu2025muonscale} proposed a framework to scale Muon for larger LLMs, mainly adding weight decay, and
carefully adjusting the per-parameter update scale.    


\section{Method and Main Results}\label{sec:main_results}


\subsection{Theoretical Motivation: Exact moments orthogonalization leads to significantly faster convergence}

Previous work on pre-training and fine-tuning large language models (LLMs) has primarily focused on reducing memory usage for constrained hardware or lowering computational cost (e.g., \cite{zhao2024galore,refael2024,zhu2024apollo}). In this paper, we take a step toward accelerating LLM optimization by showing that applying exact orthogonalization (e.g., via SVD) to the first-order moment offers a practical advantage, even over the most accurate approximations, such as the commonly used Newton-Schulz5 method. Specifically, we find that SVD converges faster and incurs lower computational overhead. To support this, we first present a new observation: the moment matrix in LLM training tends to decrease in rank over time. Building on this, we then derive an upper bound on the approximation error of Newton-Schulz5, showing that it depends on both the number of iterations and the matrix condition number, highlighting its limitations in ill-conditioned or low-rank settings (which is precisely the case in LLM optimization moments). This motivates the need for more accurate orthogonalization of moment matrices during LLM training. Of course, applying SVD directly to full-sized layers is generally impractical. The surprising result, however, is that when integrated into a low-rank optimization scheme, the use of SVD becomes not only feasible but preferable. We conclude with a convergence analysis of Muon optimization, which, under these conditions, converges significantly more slowly than the SVD-based alternative. To the best of our knowledge, our convergence analysis of Muon optimization is the first to avoid neglecting the error in the Newton-Schultz approximation \cite{li2025note}. The proofs of all lemmas and theorems of this section are relegated to the Appendix~\ref{apd:main_results}.

\begin{lemma}[Moment Becomes Low-Rank During Training]
Let $\bbm^{(t)} \in \mathbb{R}^{n \times m}$ denote the first moment of a reversible layer\footnote{Reversible networks are formally defined in Appendix~\ref{Reversibility}} in a moment-based optimization algorithm, updated according to
$\bbm^{(t)} = \beta_1 \bbm^{(t-1)} + \bbg^{(t)},$
where $\bbg^{(t)}$ is the gradient matrix at iteration $t$. Let $\bbm^{(t)} = \bbu^{(t)} \Sigma^{(t)} {\bbv^{(t)}}^\top$ be the singular value decomposition (SVD) of $\bbm^{(t)}$, and define the rank-$r$ orthogonal projection matrix as $\bbp^{(t)}(r) = \bbu^{(t)}[:, 1\!:\!r]\,\bbu^{(t)}[:, 1\!:\!r]^\top$. Then the relative error of the best rank-one approximation,
\begin{align}
    \kappa_M(t) \triangleq \frac{\|\bbm^{(t)} - \bbp^{(t)}(1)\bbm^{(t)}\|_F^2}{\|\bbm^{(t)}\|_F^2},\label{moment_decay}
\end{align}
satisfies $\kappa_M(t) \leq O(C^{-t})$ for some constant $C > 1$. \label{lem::moment_lowrank}
\end{lemma}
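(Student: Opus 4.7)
The plan is to lift the exponential rank-one convergence of the gradient $\bbg^{(t)}$ in reversible layers, established in \cite{refael2025adarankgrad}, to the accumulated first moment $\bbm^{(t)}$ by unrolling the momentum recursion and controlling the residual via a geometric sum. The inherited input I would use is $\kappa_G(s) := \|\bbg^{(s)} - \bbg^{(s)}_1\|_F^2 / \|\bbg^{(s)}\|_F^2 \leq O(C_G^{-s})$, where $\bbg^{(s)}_1$ denotes the best rank-one approximation of $\bbg^{(s)}$. In addition, I would invoke the stability of the leading singular directions $(u_1^{(s)}, v_1^{(s)})$ across iterations, a byproduct of this convergence in the reversible-layer setting of \cite{tian2021, refael2025adarankgrad}.

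First, I would unroll the update to write $\bbm^{(t)} = \sum_{s=1}^{t} \beta_1^{t-s} \bbg^{(s)}$, split each gradient as $\bbg^{(s)} = \bbg^{(s)}_1 + E^{(s)}$ with $\|E^{(s)}\|_F = O(C_G^{-s/2})\|\bbg^{(s)}\|_F$, and obtain
\[
\bbm^{(t)} \;=\; \underbrace{\sum_{s=1}^{t} \beta_1^{t-s} \bbg^{(s)}_1}_{=:\tm^{(t)}} \;+\; \underbrace{\sum_{s=1}^{t} \beta_1^{t-s} E^{(s)}}_{=:R^{(t)}}.
\]
Since $\bbp^{(t)}(1)\bbm^{(t)}$ is the optimal rank-one approximation of $\bbm^{(t)}$, for any rank-one matrix $X$ one has $\|\bbm^{(t)} - \bbp^{(t)}(1)\bbm^{(t)}\|_F \leq \|\bbm^{(t)} - X\|_F$. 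Taking $X$ to be the best rank-one approximation of $\tm^{(t)}$ and applying the triangle inequality then yields $\|\bbm^{(t)} - \bbp^{(t)}(1)\bbm^{(t)}\|_F \leq \|\tm^{(t)} - X\|_F + \|R^{(t)}\|_F$.

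Second, I would bound each piece via a geometric-sum argument. Under a standard bounded-gradient assumption $\|\bbg^{(s)}\|_F \leq G_{\max}$, the triangle inequality yields $\|R^{(t)}\|_F \leq G_{\max}\sum_{s=1}^{t} \beta_1^{t-s} C_G^{-s/2} = O(\rho^t)$ with $\rho := \max(\beta_1, C_G^{-1/2}) < 1$, independent of whether $\beta_1 C_G^{1/2}$ exceeds or falls below $1$ (the equality case only contributes a harmless polynomial factor that can be absorbed into a slightly larger $\rho$). For the remaining term, near-alignment of the leading singular triples $(u_1^{(s)}, v_1^{(s)})$ across iterations, combined with a Davis-Kahan-style perturbation estimate on the weighted sum of rank-one matrices, controls $\|\tm^{(t)} - X\|_F = O(\rho^t)$. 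Together with a lower bound $\|\bbm^{(t)}\|_F \geq c > 0$, which holds because the coherently aligned dominant components accumulate rather than cancel in the momentum sum, I conclude $\kappa_M(t) \leq O(\rho^{2t}) = O(C^{-t})$ with $C := \rho^{-2} > 1$.

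The main obstacle is rigorously establishing the cross-iteration alignment of $(u_1^{(s)}, v_1^{(s)})$; without it, a generic sum of rank-one matrices can span the full row/column space and fail to be near rank one, so the bound on $\|\tm^{(t)} - X\|_F$ would collapse. I expect the appendix to import this alignment directly from the reversibility framework of \cite{tian2021, refael2025adarankgrad}, where the leading singular directions of the gradient are shown to stabilize geometrically fast once rank collapse sets in. Once the alignment is in hand, the triangle-inequality and geometric-sum machinery above delivers the claimed $O(C^{-t})$ decay for some $C > 1$.
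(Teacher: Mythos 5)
Your proposal shares the paper's central move of unrolling the momentum recursion to $\bbm^{(t)} = \sum_{s=1}^t \beta^{t-s}\bbg^{(s)}$, but it diverges in how the gradients are decomposed, and that divergence is exactly where the gap you flag appears. The paper does not split each $\bbg^{(s)}$ into its own best rank-one approximation plus a residual. Instead, it imports the structural decomposition from GaLore's Lemma 3.3 for reversible layers: the gradient is written relative to a \emph{fixed} pair of subspaces, $\bbg^{(s)} \approx (1-\eta\lambda_1)^{s}\,g_0^{\parallel} + (1-\eta S)^{s}\,g_0^{\perp}$, where $g_0^{\parallel}$ is the projection onto the minimal eigenspace $\mathcal{V}_1$ of $S = \frac{1}{N}\sum_i \bbc_i\otimes\bbb_i$ and $g_0^{\perp}$ its orthogonal complement. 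Because $g_0^{\parallel}$ is the \emph{same} matrix at every step, the dominant contributions accumulate as $a_t\,g_0^{\parallel}$ with $a_t = \sum_{s=1}^t\beta^{t-s}(1-\eta\lambda_1)^s$ a scalar coefficient: cross-iteration alignment is automatic, and there is no Davis--Kahan estimate to prove. Your per-iteration decomposition $\bbg^{(s)} = \bbg^{(s)}_1 + E^{(s)}$ loses this, and you correctly identify that without an extra alignment argument the sum $\sum_s \beta^{t-s}\bbg^{(s)}_1$ need not stay near rank one. That alignment is not supplied in the paper either, because the paper's decomposition sidesteps the need for it.

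A second gap: your step asserting $\|\bbm^{(t)}\|_F \geq c > 0$ is not justified and is in fact false in this regime, since the gradient (and hence the momentum) decays to zero over training. The paper instead works with the ratio directly, using orthogonality of $g_0^\parallel$ and $g_0^\perp$ to write $\|\bbm^{(t)}\|_F^2 = a_t^2\|g_0^\parallel\|_F^2 + \|b_t\|_F^2$ and then bounding $\kappa_M(t) \leq \bigl(\|\bbm^{(t)}\|_F^2 - \|\bbm^{(t)}\|_2^2\bigr)/\|\bbm^{(t)}\|_F^2$, with $\|b_t\|_F^2 = O\bigl((\max\{\beta,1-\eta\lambda_2\})^{2t}\bigr)$ decaying strictly faster than $a_t^2 = \Omega\bigl((\max\{\beta,1-\eta\lambda_1\})^{2t}\bigr)$, so the ratio decays geometrically even though both numerator and denominator vanish. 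To fix your proof you would replace the additive lower bound $c$ by this relative-rate comparison, and replace the per-iteration rank-one split with a decomposition against fixed reference subspaces so that no separate alignment lemma is required.
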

The above result, in (\ref{moment_decay}), implies that $\bbm^{(t)}$ approaches its rank-one approximation $\bbp^{(t)}(1) \bbm^{(t)}$, as the iteration number increases, namely, $\bbm^{(t)}$ becomes rank-one. The following Lemma~\ref{lma:estimating_error} characterizes the impact of the moments' low-rank structure on the approximation error of the Newton-Schulz5 orthogonalization.



\begin{lemma}[Orthogonalization error $\mathbf{\mathcal{E}}_{i}$]\label{lma:estimating_error}
    For a matrix $\bba \in \real^{m \times n}$, let $\sigma_1$ be the largest singular value of $\bba \bba^\top$ and $\sigma_m$ be the smallest (without the loss of generality, assume $m\leq n$). Let $r\leq m$ be the largest index where $\sigma_r > \sigma_{r+1} = \dots = \sigma_m \geq 0$. Let $\kappa = \frac{\sigma_1}{\sigma_m}$ by the condition number of $\bba \bba^\top$.  Denote $\mathbf{\mathcal{E}}_{i}$ the error of Newton-Schultz after $i$ iterations. Then we have
    \begin{align}
        \|\mathbf{\mathcal{E}}_{i}\|_F \leq \sqrt{r} \cdot  \left(1 - \frac{1}{\kappa} \right)^{2^i}\label{lem::error_bound}.
    \end{align}
\end{lemma}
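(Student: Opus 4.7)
The plan is to reduce the matrix analysis to a scalar recursion on singular values via the equivariance of the Newton--Schulz iteration under the SVD of $\bba$. First, I would write $\bba = \bbu \Sigma \bbv^\top$ and normalize the initial iterate as $\bbx_0 = \bba/\sigma_1(\bba)$, so that all singular values of $\bbx_0$ lie in $[0,1]$ with maximum equal to $1$. A short calculation shows that the Newton--Schulz step $\bbx_{k+1} = \tfrac{1}{2}\bbx_k(3\bbi - \bbx_k^\top \bbx_k)$ preserves $\bbu$ and $\bbv$ and reduces to the scalar recursion $\sigma^{(k+1)} = \sigma^{(k)}(3 - (\sigma^{(k)})^2)/2$ applied entrywise to the singular values. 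Since the cubic $f(\sigma) = \sigma(3-\sigma^2)/2$ maps $[0,1]$ into itself monotonically (check $f(0)=0$, $f(1)=1$, $f'(\sigma) = 3(1-\sigma^2)/2 \geq 0$), the invariant $\sigma^{(k)} \in [0,1]$ is preserved throughout.

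Second, I would perform the change of variable $y^{(k)} := 1 - (\sigma^{(k)})^2$ to expose quadratic convergence. The central algebraic identity
$$y^{(k+1)} = \frac{(y^{(k)})^2\bigl(3 + y^{(k)}\bigr)}{4} \leq (y^{(k)})^2$$
follows by expanding $(1-y)(3-(1-y))^2/4$ and using $y^{(k)} \in [0,1]$; this is the heart of the argument. Iterating gives $y^{(k)} \leq (y^{(0)})^{2^k}$. For each of the $r$ singular values strictly above the floor, $y^{(0)} = 1 - \sigma_j(\bba)^2/\sigma_1(\bba)^2 \leq 1 - \sigma_m(\bba)^2/\sigma_1(\bba)^2 = 1 - 1/\kappa$, where the last equality uses the lemma's definition of $\kappa$ as the condition number of $\bba\bba^\top$.

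Finally, I would translate back to the Frobenius error. Since the nontrivial singular values of the target polar factor are all $1$,
$$\|\mathbf{\mathcal{E}}_i\|_F^2 = \sum_{j=1}^{r}\bigl(1 - \sigma_j^{(i)}\bigr)^2 \leq \sum_{j=1}^{r}\bigl(1 - (\sigma_j^{(i)})^2\bigr)^2 \leq r\bigl(1 - 1/\kappa\bigr)^{2^{i+1}},$$
using the elementary bound $(1-\sigma) \leq (1-\sigma^2)$ for $\sigma \in [0,1]$. Taking square roots and writing $(1-1/\kappa)^{2^{i+1}} = \bigl[(1-1/\kappa)^{2^i}\bigr]^2$ yields the claimed bound.

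The principal obstacle I expect is the bookkeeping around the repeated ``floor'' singular values $\sigma_{r+1} = \dots = \sigma_m$: one must argue the target orthogonalization treats these consistently with the iteration so that only the $r$ strictly-above-floor components effectively contribute to the error sum (the floor block is stationary at zero when $\sigma_m = 0$, or enters the same convergence argument when $\sigma_m > 0$). Verifying the polynomial identity for $y^{(k+1)}$ is the only genuinely computational step; the rest is elementary assembly.
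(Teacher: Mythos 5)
Your proof is correct in its main steps and takes a genuinely different — and, frankly, more rigorous — route than the paper's. The paper applies Newton--Schulz to $\bbb = \bba\bba^\top$ (rather than to $\bba$ itself, which is what both the lemma statement and the Muon optimizer actually call for), and then simply asserts the quadratic-convergence bound $\|\bbx_k - \bbo\|_2 \leq \|\bbx_0 - \bbo\|_2^{2^k}$ without proof. That asserted bound is in fact not literally true in the variable $z = 1-\sigma$: the scalar recursion is $z \mapsto z^2(3-z)/2$, which satisfies $z^{(k+1)} \geq (z^{(k)})^2$ on $[0,1]$, and one can check numerically that $z^{(0)} = 0.5$ gives $z^{(1)} = 0.3125 > (z^{(0)})^2 = 0.25$. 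Your change of variable $y = 1-\sigma^2$, together with the identity $y^{(k+1)} = y^2(3+y)/4 \leq y^2$, is exactly the device that makes quadratic convergence with constant one hold; since $1-\sigma \leq 1-\sigma^2 = y$, the final estimate still lands at $(1-1/\kappa)^{2^i}$, but now with an actual proof rather than a citation. This is the standard textbook argument for the Newton--Schulz polar iteration, stated in terms of the orthogonality defect $\|\bbi - \bbx_k^\top\bbx_k\|$, and it is a real improvement on what the paper writes.

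Both proofs share the same unfilled gap around the $\sqrt{r}$ factor, and you are right to flag it. The genuine error sum is $\|\mathcal{E}_i\|_F^2 = \sum_{j=1}^m (1-\sigma_j^{(i)})^2$ (with $\sigma_1^{(k)} \equiv 1$ after normalization), which has up to $m-1$ nontrivial terms, not $r$; your bound therefore naturally yields $\sqrt{m}$, matching the lemma's $\sqrt{r}$ only when $\sigma_m = 0$ and the target is taken to be the partial isometry $\bbu_r\bbv_r^\top$, or when the trailing block is ignored by convention. The paper's corresponding step $\|\Sigma/\sigma_1 - \bbi\|_F \leq \sqrt{r}\,\|\Sigma/\sigma_1 - \bbi\|_2$ is presented as ``it is known,'' but the inequality $\|\cdot\|_F \leq \sqrt{\mathrm{rank}}\,\|\cdot\|_2$ requires the error matrix to have rank at most $r$, which does not hold in general (the error is generically rank $m-1$). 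So neither proof actually justifies the $\sqrt{r}$ constant; yours is the more honest of the two because it isolates the issue rather than burying it in a spectral-to-Frobenius conversion.
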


According to the lemma, the approximation error grows exponentially with the condition number. Given the low-rank structure of the first-order moments, low-dimensional optimization can mitigate this error. Specifically, projecting the moment estimates $\hat{\bbm}^{(t)}$ onto their dominant (small) $r$-dimensional subspace ensures that the squared moment $\hat{\bbm}^{(t)} \hat{\bbm}^{(t)\top}
$ is constructed using only the top $r$ squared eigenvalues. These dominant components are significantly larger and exclude near-zero values, resulting in a substantially lower condition number compared to that of the full-rank squared moment matrix. This observation motivates the use of the Muon optimizer within a low-rank optimization framework for LLMs, including 2D reversible layers. Such an approach not only preserves the inherent memory efficiency of low-rank methods but also reduces approximation error during optimization, potentially leading to faster convergence and improved performance compared to full-dimensional training. However, we also empirically observe that the eigenvalues of the moment matrix decay gradually. As shown in Figure~\ref{fig::high_condition_plus_ev_decay}, even when projecting onto the dominant subspace, the resulting matrix $\hat{\bbm}^{(t)} \hat{\bbm}^{(t)\top} $, composed of the top $r=16$ squared eigenvalues, can still exhibit a large condition number, thereby introducing non-negligible approximation error.

\begin{figure}[htb]
\vskip -0.1 in
\centering
\begin{subfigure}[t]{0.48\linewidth}
    \includegraphics[width=\linewidth]{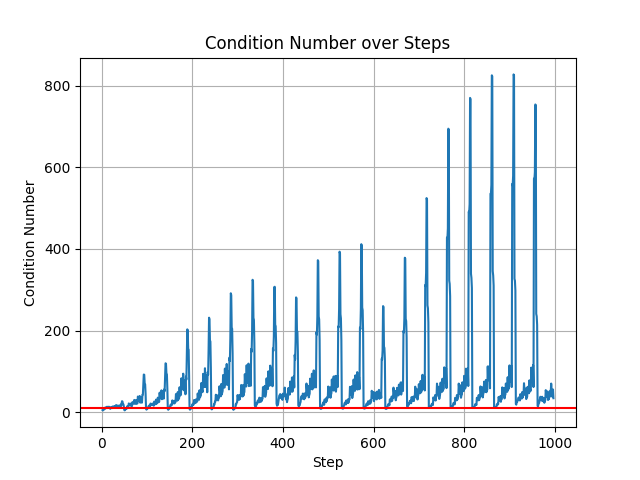}
    \caption{Condition number of the first-order moment vs. training step. The red line marks value 10.}
    \label{condtion_number_curve}
\end{subfigure}
\hfill
\begin{subfigure}[t]{0.48\linewidth}
    \includegraphics[width=\linewidth]{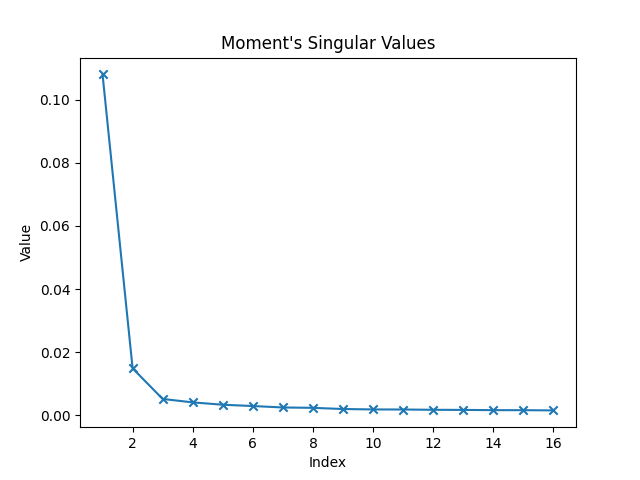}
    \caption{Illustration of the moment's singular value decay, taken arbitrarily at step 100.}
    \label{fig:svd_values}
\end{subfigure}
\caption{Evidence of anisotropy and ill-conditioning in the first-order moment matrix as a function of the Galore steps of the Roberta-base model \citep{liu2019roberta} on the GLUE dataset RTE task \citep{wang2019superglue}: (a) condition number growth, (b) spectral decay of moment.}\label{fig::high_condition_plus_ev_decay}
\vskip -0.1 in
\end{figure}

To comprehend the cumulative error of Newton-Schulz5 orthogonalization at each optimization step, we proceed to derive the convergence rate of the Moun optimization. To that end, we now provide some notations. Consider a neural network denoted as $\Phi(\cdot;\boldsymbol\theta)$, which consists of $L$ layers and is parameterized by {\footnotesize{$\boldsymbol{\theta} \triangleq \left[\bbw_1^{d_1 \times d_0}, \ldots, \bbw_{L-1}^{d_{L-1} \times d_{L-2}}, \bbw_{L}^{d_{L} \times d_{L-1}}\right]$}}. Here, $\bbw_{i}$ represents the weights tensor parameters associated with the $i$-th layer, for $i \in [L]$. We denote the differential loss $\mathcal{L}$, where, with a slight abuse of notation, we write the training problem by $\min_{\mathbf{W}} \ \mathcal{L}(\mathbf{W}) = \mathbb{E}_\xi[\mathcal{L}(\Phi(\mathbf{W}, \xi))],$ if the context refers to the weights of a certain layer. We use the Frobenius norm, denoted \(\| \cdot \|_F\), which is induced by the inner product \(\langle \bbx, \bby \rangle = \operatorname{tr}(\bbx^\top \bby)\). Assume that the stochastic gradient \(\nabla \mathcal{L}(\mathbf{W}, \xi)\) is an unbiased estimator of the full gradient \(\nabla \mathcal{L}(\mathbf{W})\), with variance bounded by \(\sigma^2\), i.e., $\mathbb{E}[\| \nabla \mathcal{L}(\mathbf{W}, \xi) - \nabla \mathcal{L}(\mathbf{W}) \|_F^2] \leq \sigma^2.$
Let $\mathbf{\mathcal{E}}^{(t)}_i= \text{orth}(\bbm^{(t)}) - \text{Newton-Schulz}(\bbm^{(t)})$ denote the approximation error of the Newton-Schulz (with $i\geq1$ iteration) at time $t$, where \(\bbm^{(t)}\) denotes the moment at iteration \(t\).


\allowdisplaybreaks

\begin{lemma}[Exact convergence rate of Muon] \label{lma:moun_convergence}
Consider the Muon optimizer update w.r.t layer $\bbw\in\mathbb{R}^{m\times n}$ defined by
\begin{align*}
& \mathbf{M}^{(t)} \leftarrow \beta \mathbf{M}^{(t-1)}+(1-\beta) \mathbf{G}^{(t)} \\
& \mathbf{O}^{(t)} \leftarrow \mathbf{U}^{(t)} \mathbf{V}^{(t)^{\top}}+\mathcal{E}_i^{(t)}, \quad(i \text { iterations Newton-Schulz approximation })\\
&
\mathbf{W}^{(t+1)} \leftarrow \mathbf{W}^{(t)}-\eta_t \mathbf{O}^{(t)},
\end{align*}
where $\mathbf{M}^{(t)}=\mathbf{U}^{(t)} \mathbf{S}^{(t)} \mathbf{V}^{(t)^{\top}}$ denotes the singular value decomposition of $\mathbf{M}^{(t)}$, and $\mathcal{E}_i^{(t)}$ represents the Newton-Schulz5 approximation error after $i$ iterations. Suppose the following: \begin{itemize}
        \item The gradient \(\nabla \mathcal{L}(\mathbf{W})\) is \(L\)-Lipschitz continuous.
        \item There exists \(\delta > 0\) such that \(\|\mathbf{\mathcal{E}}^{(t)}_i\| \leq \delta \|\mathbf{U}^{(t)} {\mathbf{V}^{(t)} }^\top\| = \delta \sqrt{n}\), for all \(t\).
    \end{itemize}  
If we take $\beta = 1 - \alpha$ with $\alpha = \min(\frac{\sqrt{RL}}{\sigma\sqrt{T}}, 1)$, $\eta_t = \eta = \frac{\sqrt{4R}}{\sqrt{(10/(1-\beta) + 2m + 4m\delta + 2m\delta^2)TL}}$, and $B=1$ (batch free convergence) than $\frac{1}{T}\sum_{t=1}^{T} \mathbb{E}\left[\|\nabla \mathcal{L}(\mathbf{W}^{(t)})\| \right]$ is bounded by\allowdisplaybreaks
   {\footnotesize{ $$
 \mathcal{O} \left( \left[\frac{\sqrt{RLm(2 + 4\delta + 2\delta^2)}}{\sqrt{T}} + \frac{\sigma^2}{\sqrt{RLT}} + \frac{\sigma(RL)^{1/4}+\sqrt{\sigma}(RL)^{1/4}}{T^{1/4}} \right] \frac{1}{1 - 4\sqrt{m} \delta}\right), $$}} 
    where $R=\mathcal{L}(\mathbf{W^{(0)}}) - \mathcal{L}^*$.
    If we take $\beta$ as an arbitrary constant, and $B=T$, we have,{\footnotesize{\begin{align*}\frac{1}{T}\sum_{t=1}^{T} \mathbb{E}\|\nabla \mathcal{L}(\mathbf{W}^{(t)})\| \leq \mathcal{O}\left(\left[\frac{\sqrt{RLm(2 + 4\delta + 2\delta^2)}}{\sqrt{T}} + \frac{\sqrt{RL}}{\sqrt{T}} + \frac{\sigma}{T^{3/2}} + \frac{\sigma}{\sqrt{T}}\right]\frac{1}{1 - 4\sqrt{m} \delta}\right).\end{align*}}}
    \end{lemma}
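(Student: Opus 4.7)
The plan is to treat this as a variant of the standard SGD-with-momentum convergence analysis, but with two non-standard ingredients: (i) the update direction is $\mathbf{O}^{(t)} = \mathbf{U}^{(t)}\mathbf{V}^{(t)\top} + \mathcal{E}_i^{(t)}$ rather than $\mathbf{M}^{(t)}$ itself, and (ii) the $\mathcal{E}_i^{(t)}$ term from the Newton-Schulz approximation has to be carried through the entire argument. I would open with $L$-smoothness to obtain the descent inequality
$$
\mathcal{L}(\mathbf{W}^{(t+1)}) \le \mathcal{L}(\mathbf{W}^{(t)}) - \eta_t \langle \nabla\mathcal{L}(\mathbf{W}^{(t)}), \mathbf{O}^{(t)}\rangle + \tfrac{L\eta_t^2}{2}\|\mathbf{O}^{(t)}\|_F^2,
$$
and control the quadratic term by $\|\mathbf{O}^{(t)}\|_F^2 \le 2\|\mathbf{U}^{(t)}\mathbf{V}^{(t)\top}\|_F^2 + 2\|\mathcal{E}_i^{(t)}\|_F^2 \le 2m + 2m\delta^2$, using $\|\mathcal{E}_i^{(t)}\|_F \le \delta\sqrt{m}$ and that $\mathbf{U}^{(t)}\mathbf{V}^{(t)\top}$ has at most $m$ unit singular values.

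The heart of the argument is a Muon-style lower bound on the linear term. Since $\mathbf{M}^{(t)} = \mathbf{U}^{(t)}\mathbf{S}^{(t)}\mathbf{V}^{(t)\top}$, nuclear/spectral duality gives $\langle \mathbf{M}^{(t)}, \mathbf{U}^{(t)}\mathbf{V}^{(t)\top}\rangle = \|\mathbf{M}^{(t)}\|_*$ and $|\langle \mathbf{A}, \mathbf{U}^{(t)}\mathbf{V}^{(t)\top}\rangle| \le \|\mathbf{A}\|_*$ for every $\mathbf{A}$. Combining these yields
$$
\langle \nabla\mathcal{L}(\mathbf{W}^{(t)}), \mathbf{U}^{(t)}\mathbf{V}^{(t)\top}\rangle \ge \|\nabla\mathcal{L}(\mathbf{W}^{(t)})\|_* - 2\,\|\nabla\mathcal{L}(\mathbf{W}^{(t)}) - \mathbf{M}^{(t)}\|_*.
$$
The Newton-Schulz contribution is handled by $|\langle \nabla\mathcal{L}(\mathbf{W}^{(t)}), \mathcal{E}_i^{(t)}\rangle| \le \|\nabla\mathcal{L}(\mathbf{W}^{(t)})\|_*\,\|\mathcal{E}_i^{(t)}\|_2 \le \delta\sqrt{m}\,\|\nabla\mathcal{L}(\mathbf{W}^{(t)})\|_*$, which is the source of the $\sqrt{m}\delta$ terms that will eventually be absorbed onto the left-hand side to produce the $(1-4\sqrt{m}\delta)^{-1}$ factor.

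Next I would set up the standard momentum-tracking recursion. Writing $\mathbf{M}^{(t)} - \nabla\mathcal{L}(\mathbf{W}^{(t)}) = \beta(\mathbf{M}^{(t-1)}-\nabla\mathcal{L}(\mathbf{W}^{(t-1)})) + \beta(\nabla\mathcal{L}(\mathbf{W}^{(t-1)})-\nabla\mathcal{L}(\mathbf{W}^{(t)})) + (1-\beta)(\mathbf{G}_t - \nabla\mathcal{L}(\mathbf{W}^{(t)}))$ and taking squared Frobenius norm in expectation, one obtains (using $L$-smoothness for the drift term and the $\sigma^2/B$ noise bound for the stochastic term) a recursion of the form
$$
\mathbb{E}\|\mathbf{M}^{(t)} - \nabla\mathcal{L}(\mathbf{W}^{(t)})\|_F^2 \le (1-\alpha)\,\mathbb{E}\|\mathbf{M}^{(t-1)} - \nabla\mathcal{L}(\mathbf{W}^{(t-1)})\|_F^2 + \tfrac{2\alpha\sigma^2}{B} + \tfrac{2L^2\eta^2}{\alpha}\,\mathbb{E}\|\mathbf{O}^{(t-1)}\|_F^2,
$$
with $\alpha = 1-\beta$. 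Unrolling this recursion and summing over $t$ controls the averaged tracking error that appears in the descent lemma, after converting between $\|\cdot\|_*$ and $\|\cdot\|_F$ via $\|\mathbf{A}\|_* \le \sqrt{\mathrm{rank}(\mathbf{A})}\,\|\mathbf{A}\|_F \le \sqrt{m}\,\|\mathbf{A}\|_F$.

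Finally, I telescope the descent inequality from $t=0$ to $T-1$, substitute the quadratic bound $2m+2m\delta^2$, the averaged tracking-error bound, and the cross-term contribution $4\sqrt{m}\delta\,\mathbb{E}\|\nabla\mathcal{L}(\mathbf{W}^{(t)})\|$; dividing by $\eta T$ and moving the $\sqrt{m}\delta$ terms to the LHS produces the factor $(1-4\sqrt{m}\delta)^{-1}$. The stated rates then follow by plugging in the specific choices: in case one, $\alpha = \min(\sqrt{RL}/(\sigma\sqrt{T}),1)$ and the matched $\eta$ optimally balance the noise and drift contributions, producing the mixed $T^{-1/2}$ and $T^{-1/4}$ terms; in case two, taking $B=T$ divides the variance term by $T$ and removes the need to tune $\alpha$, so a constant $\beta$ suffices and the rate collapses to $T^{-1/2}$ plus a vanishing $T^{-3/2}$ term. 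The main obstacle is the bookkeeping: the $\mathcal{E}_i^{(t)}$ terms appear in both the linear and quadratic parts of the descent inequality and inside the momentum recursion through $\|\mathbf{O}^{(t)}\|_F^2$, so one must carefully collect all $\delta$-dependent contributions so that exactly the right combination lands in the $1-4\sqrt{m}\delta$ absorption step; a secondary subtlety is that the tracking error is naturally Frobenius while the Muon inequality is nuclear, and converting between them injects the $\sqrt{m}$ factors that dictate the scaling of every $\delta$ term in the final bound.
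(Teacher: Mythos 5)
Your plan covers the same high-level structure as the paper's proof (descent inequality from $L$-smoothness, a lower bound on the linear term in terms of the momentum tracking error, a recursion bounding that tracking error, telescoping, balancing $\eta$ and $\alpha$, and absorbing the $\sqrt{m}\delta$ terms to produce the $(1-4\sqrt{m}\delta)^{-1}$ factor), but you diverge from the paper in two substantive ways. First, for the linear term $\langle\nabla\mathcal{L},\bbu^{(t)}{\bbv^{(t)}}^\top\rangle$ you appeal directly to nuclear/spectral (Schatten-$1$/Schatten-$\infty$) duality, writing $\langle\mathbf{M},\bbu\bbv^\top\rangle=\|\mathbf{M}\|_*$ and $|\langle\bba,\bbu\bbv^\top\rangle|\le\|\bba\|_*$ to get $\langle\nabla\mathcal{L},\bbu\bbv^\top\rangle\ge\|\nabla\mathcal{L}\|_*-2\|\nabla\mathcal{L}-\mathbf{M}\|_*$; the paper instead invokes the $(-\tfrac14,\tfrac52)$ inequality from Cutkosky--Mehta (its ``equation 2.8''), which was derived for normalized vector SGD and is being transplanted to the matrix/orthogonalization setting without re-derivation. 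Your duality argument is the cleaner and more natural route for the matrix case; once you convert the nuclear-norm tracking error to Frobenius via $\|\cdot\|_*\le\sqrt{m}\|\cdot\|_F$, you would recover the same qualitative bound (with slightly different constants). Second, for the momentum tracking error you set up a recursion on $\mathbb{E}\|\hat{\gamma}^{(t)}\|_F^2$ (STORM-style), unroll it, and then pass to $\mathbb{E}\|\hat{\gamma}^{(t)}\|_F$ by Jensen. The paper instead unrolls the decomposition $\hat{\gamma}^{(t+1)}=\beta^t\hat{\gamma}^{(1)}+(1-\alpha)\sum_\tau\beta^\tau\gamma^{(t-\tau)}+\sum_\tau\beta^\tau S(\cdot)$ and bounds $\mathbb{E}\|\hat{\gamma}^{(t+1)}\|$ directly, using the pairwise independence of the $\gamma^{(\tau)}$ to produce $\sqrt{\sum\beta^{2\tau}}\cdot\sigma/\sqrt{B}$. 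Both routes yield the same $T^{-1/4}$ balancing behaviour when done carefully.

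One thing to watch in your recursion: you wrote the fresh-noise contribution as $\tfrac{2\alpha\sigma^2}{B}$, but the conditional-expectation/orthogonality argument (noise at step $t$ is mean-zero given $\mathcal{F}_{t-1}$) gives $\alpha^2\sigma^2/B$, not $2\alpha\sigma^2/B$. With $\tfrac{2\alpha\sigma^2}{B}$, the geometric sum produces a steady-state $\mathbb{E}\|\hat\gamma^{(t)}\|^2\lesssim\sigma^2/B$, i.e., $\mathbb{E}\|\hat\gamma^{(t)}\|\lesssim\sigma/\sqrt{B}$ with no $\sqrt{\alpha}$ factor, and summing gives a non-decaying $T\sigma/\sqrt{B}$ term; you would then be unable to recover the $T^{-1/4}$ rate in the $B=1$ case. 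With the correct $\alpha^2$ coefficient, the steady-state tracking error is $\sqrt{\alpha}\sigma/\sqrt{B}+L\eta\sqrt{m}/\alpha$, which matches the paper's $\sqrt{1-\beta}\,\sigma+\eta L/(1-\beta)$ and yields the stated rates after the standard $\alpha\sim\sqrt{RL}/(\sigma\sqrt{T})$ tuning. Apart from this coefficient slip your argument is sound and would give the same result.
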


\begin{remark}[Comparison: slower convergence vs exact orthogonalization] When $\delta = 0$, indicating an absence of error, the convergence rate is aligned with the one derived in \citep{li2025note}, Theorem~2.1, that is {\footnotesize{$$\frac{1}{T}\sum_{t=1}^{T} \mathbb{E}\left[\|\nabla \mathcal{L}(\mathbf{W}^{(t)})\| \right] \leq \mathcal{O}\left(\frac{\sqrt{n R L}}{\sqrt{T}}+\frac{\sigma}{T^{3 / 2}}+\frac{\sigma}{\sqrt{T}}\right).$$}} This result overlooks the error associated with the Newton-Schulz5 approximation because it is based on a theoretically exact method of orthogonalization.
\end{remark}

\begin{remark}[The impact of $\delta$ on the convergence rate]
A reduction in $\delta$ is associated with an improvement in the convergence rate. Furthermore, it should be noted that $\delta$ influences the step size $\eta$; a larger $\delta$ yields a smaller $\eta$, providing an additional explanation for the convergence rate.
\end{remark}

\begin{remark}[The size of $\delta$]
We acknowledge that the findings of our analysis are applicable only under the conditions specified in $1 - 4 \sqrt{n} \delta > 0 \Rightarrow \delta < \frac{1}{4 \sqrt{n}}$. In scenarios where $\delta > \frac{1}{4 \sqrt{n}}$ applies, the algorithm may fail to converge. To ensure that $\delta$ remains sufficiently small, the Newton-Schulz5 method necessitates a substantial number of iterations, consequently slowing down the convergence.
\end{remark}

\begin{remark}[Speed-up by SVD vs Newton-Schulz5 approximation]
According to Lemma~\ref{lma:estimating_error}, these low-rank moments, which inherently possess exceptionally high $\kappa$, result in an error expressed by $(1-\varepsilon)^{2^i}$ concerning a remarkably small $\varepsilon$. This situation necessitates numerous iterations for the Newton-Schultz method to converge. For example, if $(1-\varepsilon) = 0.99$ is considered and Newton-Schultz5 is utilized with 5 iterations, the error would be $\approx 0.99^{32} = 0.725,$ relative to the norm of the moment, namely $\bbm$.
\end{remark}

Recall that in the low-rank setting, accurately computing the pseudoinverse using singular value decomposition (SVD) is numerically advantageous and reasonably computationally affordable compared to iterative methods such as Newton--Schulz. For a general matrix $\mathbf{A} \in \mathbb{R}^{n \times m}$, the SVD provides a decomposition $\mathbf{A} = \mathbf{U}\mathbf{\Sigma}\mathbf{V}^\top$, with $\mathbf{U} \in \mathbb{R}^{n \times n}$, $\mathbf{\Sigma} \in \mathbb{R}^{n \times m}$, and $\mathbf{V} \in \mathbb{R}^{m \times m}$. The Moore--Penrose pseudoinverse is then calculated as $\mathbf{A}^\dagger = \mathbf{V}\mathbf{\Sigma}^\dagger\mathbf{U}^\top$, requiring approximately $4nm^2 + 8m^3$ floating-point operations (FLOPs) for the initial decomposition, and an additional $mn^2 + m^2n$ FLOPs for subsequent multiplications, totaling roughly $4nm^2 + 8m^3 + mn^2 + m^2n$ FLOPs.

Alternatively, approximating the inverse of $\mathbf{A}^\top \mathbf{A} \in \mathbb{R}^{m \times m}$ using Newton--Schulz iterations involves $nm^2$ FLOPs to form the matrix $\mathbf{A}^\top \mathbf{A}$, approximately $20m^3 + 10m^2$ FLOPs for five iterations, and an additional $m^2n$ FLOPs to multiply by $\mathbf{A}^\top$, resulting in a total of about $nm^2 + m^2n + 20m^3 + 10m^2$ FLOPs. For example, when the rank is $m = 8$ and $n = 1024$, the SVD approach requires approximately twice as many operations as Newton--Schulz5. Nonetheless, given the superior numerical stability and inherent optimality of the SVD-based method, this moderate increase in computational effort remains acceptable, especially when accuracy and stability are prioritized.

\subsection{Method}
We are now ready to present our main algorithm designed to accelerate the low-rank optimization scheme outlined in Algorithm~\ref{alg::SUMO}. A detailed mathematical formulation of the weight update rule proposed in this paper is given in Appendix~\ref{Update_Step}. The algorithm consists of four primary blocks, all contained within an outer loop that continues until convergence is achieved or a predefined number of epochs is reached. Each block serves a specific purpose, which will be explained in detail below.

\begin{itemize}[leftmargin=*]
    \item \textbf{Block 1 (Adaptive Subspace Selection)}: We select the subspace along the directions of the $r$ largest eigenvectors, but since computing full SVD for large matrices is computationally intensive and memory-demanding, we leverage the Randomized-SVD by \citep{halko2010}, which is an efficient technique for producing a ``good" proxy for the optimal low-rank approximation. It solves the optimization problem  $\arg\min _{\mathbf{Q}^{(t)} \in \mathbb{R}^{m \times r}} \left\|\bbg - \mathbf{Q}^{(t)} {\mathbf{Q}^{(t)}}^\top \bbg{(t)} \right\|_F,$
    and approximates the matrix $\bbg$ as $\hat{\bbg} \approx \mathbf{Q}^{(t)} {\mathbf{Q}^{(t)}}^\top \bbg{(t)},$ that requires $O(mnr + mr^2)$ operations, instead of $O(\min(mn^2, m^2n))$ applied by SVD.
    \item \textbf{Block 1.1 (Moment Subspace Transformation)}: We transform the first-order moments evaluated during the low-rank optimization steps, which occur in Block 2, between the preceding and the newly updated subspace. This transformation is required because, as will be demonstrated later, the first moments of the gradients in Block 2 are aligned with the previously projected subspace. Consequently, a transformation is necessary to translate them from the former subspace to the current one. 
    \item \textbf{Block 2 (Low-Rank Steepest Descent Optimization)}: Here we calculate the (steepest) optimization step. SVD operation is adopted to solve exactly $(\bbm^{(t)}{\bbm^{(t)}}^{\top})^{-1/2}\bbm^{(t)}$.
    Let $\bbu \boldsymbol{\Sigma} \mathbf{V}^{\top}=\hat{\bbm}^{(t)}$ 
    be the singular value decomposition (SVD) of $\hat{\bbm}^{(t)}$, we will have $(\bbm^{(t)}{\bbm^{(t)}}^{\top})^{-1/2}\bbm^{(t)}=\mathbf{U} \mathbf{V}^{\mathbf{T}}$, which orthogonalizes $\hat{\bbm}^{(t)}$. Formally, the 
    $$\text{Orthogonalization\_SVD}(\bba)=\underset{\bbo}{\operatorname{arg\,min}} \{ \|\bbo - \bba\|_F : \text{either } \bbo^T \bbo = \bbi \text{ or } \bbo\bbo^T = \bbi \}.$$
    \item \textbf{Block 3}: Rather than using standard gradient clipping, we adopt the Norm-growth Limiter (NL) introduced in \cite{chen2024fira}, which has been shown to slightly outperform traditional clipping techniques by better constraining the progression of gradient magnitudes. Specifically, the gradient update is modified as follows,
    ${\small{
    \text{if } \frac{\left\|\bbo^{(t)}\right\|}{\left\|\bbo^{(t-1)}\right\|} > \gamma \text{ then } \bbo^{(t)} \leftarrow \frac{\bbo^{(t)}}{\left\|\bbo^{(t)}\right\|} \cdot \gamma \left\|\bbo^{(t-1)}\right\|,
    }}$
    where the scalar $\gamma$ serves as a growth threshold to regulate abrupt increases in gradient norm from one iteration to the next. We use $\gamma=1.1$, which empirically yields the best results. 

\item \textbf{Block 4 (Update Step in the Original Space).} 
To better exploit already computed gradient information, we suggest to use the orthogonal term of the gradient that lies outside the low-rank subspace spanned $\bbo^{(t)},$ namely ${\bbg^{(t)}}^\perp = \bbg^{(t)} - {\bbq^{(t)}} \underbrace{{\bbq^{(t)}}^\top \bbg^{(t)}}_{\hat{\bbg}^{(t)} }$. Importantly, ${\bbg^{(t)}}^\perp$ does not interfere with the orthogonalized moment estimation $\bbo^{(t)}$, meaning it can be incorporated without compromising stability.  Since $ \bbg^{(t)}$ is already computed and stored in each iteration, no extra memory is required. Furthermore, because $ \bbq^{(t)}$ is of a low rank (typically rank $4$, $8$, or $16$), the additional computationations overhead is negligible.
For efficient memory usage, instead of explicitly forming the perpendicular part 
$\bbg^{(t)\perp}$,
we work only in the rank-$r$ subspace spanned by ${\bbq^{(t)}}$.
Practically, we use 
fact that
$$
\underbrace{\bigl(\bbg^{(t)}-\bbq^{(t)}{\bbq^{(t)}}^\top\bbg^{(t)}\bigr)}_{\bbg^{(t)\perp}}
\;+\; \bbq^{(t)}\bbo^{(t)}
= \bbg^{(t)} - \bbq^{(t)}\!\bigl(\hat{\bbg}^{(t)}-\bbo^{(t)}\bigr),
$$
which utilizes the already calculated projected gradient
$\hat{\bbg}^{(t)}:=\; {\bbq^{(t)}}^\top \bbg^{(t)}$.
Accordingly, the pre-trained model parameters are updated
along with weight decay,
$$
\bbw^{(t)}
\;\leftarrow\;
\bbw^{(t-1)}
\;-\;\alpha\eta\,
\Bigl(\bbg^{(t)}-\bbq^{(t)}\bigl(\hat{\bbg}^{(t)}-\bbo^{(t)}\bigr)\Bigr)
\;-\;\eta\,\lambda\,\bbw^{(t-1)}.
$$
To ensure stable training across parameter matrices of different shapes, we interpret the root mean square (RMS) magnitude of updates as implicit \emph{layer-wise learning rate adaptation}, following the approach in \cite{liu2025muonscale}. By scaling updates by $\sqrt{\max(m,n)}$, our method compensates for shape-induced magnitude differences, achieving consistent effective learning rates across layers, similar to adaptive optimizers like AdamW.
\end{itemize}
\vskip -0.3 in
\definecolor{highlightcolor}{rgb}{0.9, 0.9, 0.9}  %
\begin{algorithm}[htb]
   \caption{SUMO: Subspace-Aware Moment-Orthogonalization Optimization}
 \begin{algorithmic}
   \STATE {\bfseries Input:} A weight matrix $\textbf{W} \in \mathbb{R}^{m \times n}$ with $m \geq n$. Step size $\eta$, scale factor $\alpha$, decay rates \{$\beta_1, \beta_2$\}, weight decay $\lambda$, rank $r$, subspace update frequency $K$, small number $k\in\mathbb{N},$ step clipping ratio $\gamma$.
   \STATE \textbf{Initialize}: $t \gets 0$  
   \REPEAT
   \STATE \textcolor{blue}{\# Block 1: Calculate low rank gradient projection.}
    \STATE Sample mini-batch $B=\left\{\xi_1, \xi_2, \ldots, \xi_{{|B|}}\right\}$
    \STATE Compute $\bbg^{(t)} \leftarrow \sum_{i=1}^{|B|}\frac{\partial}{\partial \bbw} \mathcal{L}\left(\Phi(x_i, \boldsymbol{\theta}),y_i\right)$
   \IF{$t \bmod K = 0$
   } 
    \STATE $\mathbf{Q}^{(t)} \leftarrow\operatorname{Truncated\_Randomized\_SVD}(\mathbf{G}^{(t)})$  {\color{gray}\# Alternatively $\operatorname{Truncated\_SVD}(\mathbf{G}^{(t)})$}
    \STATE {\color{blue}\# Block 1.1: Moment subspaces transformation}
    \STATE $\bbr^{r \times r} \leftarrow {\bbq^{(t)}}^{\top}\bbq^{(t-1)} \text{ if } t\geq 1, \text{ else } \mathbf{0}^{r \times r}$
    \STATE ${\bbm^{(t)}}^{r \times n} \leftarrow \bbr {\bbm^{(t-1)}},\text{ if } t\geq 1, \text{ else } \mathbf{0}^{r \times n}$\hfill{\color{gray}\COMMENT{$1^{st}$-order moment}}
   \ENDIF \; {\color{gray}\# Alternatively criteria $\|\hat{\bbg}^{(t)}\|\leq\varsigma$}
    \STATE $\hat{\bbg}^{(t)}\leftarrow {\bbq^{(t)}}^{\top}\bbg^{(t)}$
    \STATE \textcolor{blue}{\# Block 2: Low-rank steepest-decent step (moment ortogonalization)}
    \STATE $\bbm^{(t)}\leftarrow \mu \bbm^{(t-1)}+ \hat{\bbg}^{(t)}$
    \STATE $\bbo^{(t)}\leftarrow \text{Orthogonalization\_SVD}(\bbm^{(t)})$
   \STATE \textcolor{blue}{\# Block 3 (Optional): }
   \STATE \textbf{if} $\frac{\left\|\bbo^{(t)}\right\|}{\left\|\bbo^{(t)}\right\|}>\gamma$ \textbf{then} $\mathbf{O}^{(t)} \leftarrow \frac{\mathbf{O}^{(t)}}{\left\|\mathbf{O}^{(t)}\right\|} \cdot \gamma\left\|\mathbf{O}^{(t-1)}\right\| $
   \STATE \textcolor{blue}{\# Block 4: Update weight in original space.}
    \STATE $\bbw^{(t)}
\;\leftarrow\;
\bbw^{(t-1)}
\;-\;\alpha\eta\,
\Bigl(\bbg^{(t)}-\bbq^{(t)}\bigl(\hat{\bbg}^{(t)}-\bbo^{(t)}\bigr)\Bigr)
\;-\;\eta\,\lambda\,\bbw^{(t-1)}$
   \STATE $t \gets t + 1$
   \UNTIL{convergence criteria met {\color{gray} (e.g. epoch number, gradient norm $\|\bbg^{(t)}\|\leq\xi$)}}
    \STATE \textbf{return} $\mathbf{W}^{(T)}$ 
 \end{algorithmic}
 \label{alg::SUMO}

\end{algorithm}
\allowdisplaybreaks
 \vspace{ -0.1 in}
Note that, for clarity, we can assume, without loss of generality, that \( m \geq n \). In the opposite scenario, the projection matrix would multiply the gradient from the right-hand side.

\begin{theorem}[Convergence of SUMO]\label{thm:Convergence_SUMO}
For a loss function $\mathcal{L},$ and given architecture $\Phi$, suppose that the compositions of $f\equiv\mathcal{L}\left(\Phi(\cdot)\right)$ is $\beta$-smooth non-convex function that is bounded by some $M\in\mathbb{R}_+$. Let $\bbg^{(t)}_j$ denote the gradient matrix w.r.t. the $j$-th reversible layer $\bbw^{(t)}_j,$ at time $t\in\mathbb{N}$, for all $j\in[L]$, and $T_\ell,\ell\in\mathbb{N}$ times are set by a convergence criterion (that is, $\|\hat{\bbg}_{\mathsf{T}_\ell}\|\leq\varsigma_\ell$). Then, there exist $\mathsf{C}\in\mathbb{R}_+$ and $N$ such that for all $\mathsf{T}_N>\frac{\mathsf{C}}{\varepsilon^2}$, and $\frac{1}{\mathsf{T}_N}\sum_{i=0}^{N-1}\sum_{t=\mathsf{T}_{i}}^{\mathsf{T}_{i+1}-1}\left\|{\bbg^{(t)}_j}\right\|_F^2 \leq \varepsilon
$. Namely, Algorithm \ref{alg::SUMO} 
achieves an $\varepsilon$-critical point,\footnote{Also known as $\varepsilon$-stationary, see, e.g., \citep{cossonLowRankGradientDescent2023}.} i.e., $\left\|\bbg^{(t)}_j\right\|_F^2\leq\varepsilon$, for some $t\in\mathbb{N}$, and any $j\in[L]$.
\end{theorem}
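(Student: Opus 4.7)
The plan is to combine a $\beta$-smoothness descent analysis with the subspace-aware structure of SUMO and then bridge between the low-rank subspace gradient and the full gradient using the low-rank result of Lemma \ref{lem::moment_lowrank}. Starting from smoothness applied to the update $\bbw^{(t+1)} - \bbw^{(t)} = -\eta\alpha \bbq^{(t)}\bbo^{(t)} - \eta\lambda \bbw^{(t)}$, I would obtain the per-iteration descent bound
\begin{align*}
f(\bbw^{(t+1)}) \le f(\bbw^{(t)}) - \eta\alpha \langle \hat{\bbg}^{(t)}, \bbo^{(t)}\rangle + \tfrac{\beta}{2}\eta^2 \alpha^2 \|\bbq^{(t)}\bbo^{(t)}\|_F^2 + \text{(decay terms)},
\end{align*}
where $\langle \bbg^{(t)}, \bbq^{(t)}\bbo^{(t)}\rangle = \langle \hat{\bbg}^{(t)}, \bbo^{(t)}\rangle$ since $\bbq^{(t)}$ has orthonormal columns, and the weight-decay contribution can be absorbed using the boundedness of $f$ by $M$. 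Because $\bbo^{(t)} = \bbu\bbv^\top$ is a partial isometry of rank $\le r$, we have $\|\bbq^{(t)}\bbo^{(t)}\|_F^2 \le r$ \emph{deterministically}, which is the technical gain from exact SVD orthogonalization over the Newton--Schulz variant.

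Next I would exploit the polar-factor identity $\langle \bbm^{(t)}, \bbo^{(t)}\rangle = \|\bbm^{(t)}\|_*$. Unrolling the moment recursion $\bbm^{(t)} = \mu\bbm^{(t-1)} + \hat{\bbg}^{(t)}$ and using $\|\cdot\|_* \ge \|\cdot\|_F$, standard momentum bookkeeping (as in \cite{li2025note}) yields a lower bound of the form $\langle \hat{\bbg}^{(t)}, \bbo^{(t)}\rangle \ge c_1(\mu)\,\|\hat{\bbg}^{(t)}\|_F - \text{(telescoping remainder in }\mu^{t-s}\hat{\bbg}^{(s)}\text{)}$. Within each window $[\mathsf{T}_\ell, \mathsf{T}_{\ell+1})$ the subspace $\bbq^{(t)}$ is constant, so telescoping the descent inequality from $\mathsf{T}_\ell$ to $\mathsf{T}_{\ell+1}$ and then summing over $\ell = 0,\dots,N-1$ gives
\begin{align*}
\sum_{i=0}^{N-1}\sum_{t=\mathsf{T}_i}^{\mathsf{T}_{i+1}-1}\|\hat{\bbg}^{(t)}_j\|_F^2 \le \frac{f(\bbw^{(0)})-f^\ast}{c_1 \eta\alpha} + c_2 \eta\alpha r \mathsf{T}_N,
\end{align*}
for constants $c_1,c_2$ depending on $\mu,\beta$. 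At the subspace-transition steps, Block~1.1 (the moment transformation $\bbr\bbm^{(t-1)}$) preserves the telescoping because $\bbr$ is a sub-orthogonal change of basis, so the descent progress carries across boundaries. Optimizing $\eta\alpha = \Theta(1/\sqrt{\mathsf{T}_N})$ in the usual way produces an $O(1/\sqrt{\mathsf{T}_N})$ average bound on $\|\hat{\bbg}^{(t)}_j\|_F^2$.

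The main obstacle is upgrading this subspace-restricted bound to a bound on the \emph{full} gradient $\bbg^{(t)}_j$, since $\|\hat{\bbg}^{(t)}\|_F \le \|\bbg^{(t)}\|_F$ goes the wrong way. To close this gap I would invoke the reversibility assumption on layer $j$ together with Lemma \ref{lem::moment_lowrank} and its gradient counterpart from \cite{refael2025adarankgrad}, which imply that $\|(\mathbf{I}-\bbp^{(t)}(r))\bbg^{(t)}_j\|_F^2 \le O(C^{-t}) \cdot \|\bbg^{(t)}_j\|_F^2$ for a constant $C>1$. Because the Randomized-SVD step at $\mathsf{T}_\ell$ aligns $\bbq^{(t)}$ with the top-$r$ singular subspace of $\bbg^{(\mathsf{T}_\ell)}$, the residual $\|\bbg^{(t)}_j\|_F^2 - \|\hat{\bbg}^{(t)}_j\|_F^2$ is geometrically small after enough subspace refreshes. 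Picking $N$ large enough that the residual is below $\varepsilon/2$, and then $\mathsf{T}_N \ge \mathsf{C}/\varepsilon^2$ for the constant $\mathsf{C}$ emerging from the descent summation, forces $\frac{1}{\mathsf{T}_N}\sum \|\bbg^{(t)}_j\|_F^2 \le \varepsilon$, whence some $t$ satisfies $\|\bbg^{(t)}_j\|_F^2 \le \varepsilon$ and the $\varepsilon$-critical-point property follows.
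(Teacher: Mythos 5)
Your descent analysis within a fixed subspace window is structurally the same as the paper's: $\beta$-smoothness plus the lower bound on $\langle\hat{\bbg}^{(t)},\bbo^{(t)}\rangle$ via the polar-factor / momentum accounting of \cite{li2025note} (the paper invokes their Eq.~(2.8) directly to get $-\langle\hat{\bbg}^{(t)},\bbo^{(t)}\rangle \le -\tfrac14\|\hat{\bbg}^{(t)}\|_F^2 + \tfrac52\|\hat{\bbg}^{(t)}-\bbm^{(t)}\|_F^2$), and the telescoping over windows $[\mathsf{T}_i,\mathsf{T}_{i+1})$ is correct. Your observation that $\|\bbq^{(t)}\bbo^{(t)}\|_F^2\le r$ is a small tightening over the paper's $\le n$. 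Up to that point you and the paper are essentially aligned.

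The genuine gap is in the last step, the bridge from $\|\hat{\bbg}^{(t)}\|_F^2$ to $\|\bbg^{(t)}_j\|_F^2$. You appeal to Lemma~\ref{lem::moment_lowrank} (and its gradient analogue from \cite{refael2025adarankgrad}) to claim $\|(\mathbf{I}-\bbp^{(t)}(r))\bbg^{(t)}_j\|_F^2 \le O(C^{-t})\|\bbg^{(t)}_j\|_F^2$. But that decay statement concerns $\bbp^{(t)}(r)$, the projection onto the \emph{instantaneous} top-$r$ left-singular subspace of $\bbg^{(t)}_j$. SUMO projects onto $\bbq^{(\mathsf{T}_i)}$, which is frozen at the last refresh and computed from a (randomized) SVD of $\bbg^{(\mathsf{T}_i)}_j$ — a different matrix. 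Between refreshes there is subspace drift, and nothing in your argument controls $\|(\mathbf{I}-\bbq^{(\mathsf{T}_i)}{\bbq^{(\mathsf{T}_i)}}^\top)\bbg^{(t)}_j\|$ for $t\in(\mathsf{T}_i,\mathsf{T}_{i+1})$; in fact this residual can grow within a window even while the ideal-subspace residual decays. "Picking $N$ large enough that the residual is below $\varepsilon/2$" therefore does not follow as stated. The paper closes this gap differently: it posits a fixed energy-capture ratio at the refresh time, $\|\hat{\bbg}^{(\mathsf{T}_i)}-\bbq^{(\mathsf{T}_i)}\hat{\bbg}^{(\mathsf{T}_i)}\|_F^2 \le \alpha\|\hat{\bbg}^{(\mathsf{T}_i)}\|_F^2$ with $\alpha\in(0,1]$, yielding a constant blow-up factor $1/(1-\alpha)$, and then carries that bound across the window using GaLore's Lemma~B.3 (monotonicity $\mathbb{E}\|\hat{\bbg}^{(t)}\|_F^2 \le \mathbb{E}\|\hat{\bbg}^{(\mathsf{T}_i)}\|_F^2$ under $\eta\le 2/\lambda_{\max}$), rather than relying on geometric decay of the off-subspace mass. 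To repair your argument you would need either that monotonicity-style lemma, or an explicit perturbation bound on how fast the top-$r$ singular subspace of $\bbg^{(t)}_j$ drifts away from $\bbq^{(\mathsf{T}_i)}$ over one window.
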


The proof of Theorem~\ref{alg::SUMO} can be found in Appendix~\ref{apd:main_results}. We emphasize that the convergence proof for Galore in \cite{zhao2024galore} addresses only optimization within fixed subspaces, ignoring dynamic updates. AdaRankGrad's proof \cite{refael2025adarankgrad} first established guarantees for the complete dynamic-subspace updates, yet both prior works simplified the inner steps as standard SGD. In contrast, SUMO's convergence proof explicitly considers the exact optimization steps without simplifications. 

To reduce memory consumption, Algorithm \ref{alg::SUMO} applies per-layer weight updates during backpropagation, following recent works such as \cite{lv2024adalomolo}. This contrasts with conventional optimizers, which store full gradients and update all weights afterward, potentially leading to inefficiencies. Details for post-hoc adapter extraction are discussed in Appendix~\ref{addditinal_info}.

\section{Experiments}\label{sec::Experiments}

\begin{figure}[t]
    \centering
    \includegraphics[width=0.5\textwidth]{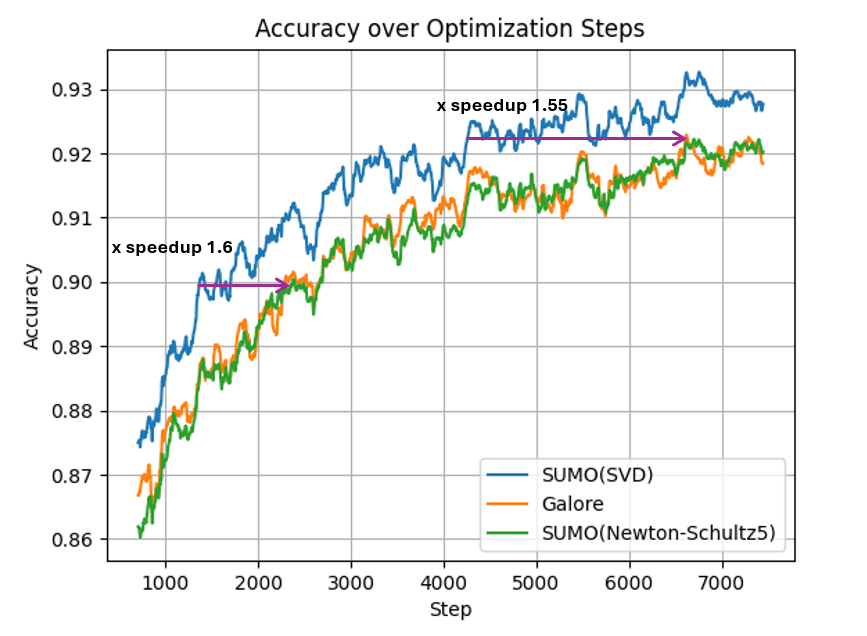}
    \caption{SUMO with SVD demonstrates superior convergence speed ($\sim\! 1.6\times$ faster), attaining comparable or higher accuracy than GaLore and SUMO with Newton-Schultz5 with significantly fewer optimization steps on QNLI.}
    \label{fig:anli_accuracy}
    \vskip -0.1 in
\end{figure}

\paragraph{Fine-tuning on GLUE benchmark.}
Our model was evaluated using the GLUE benchmark \citep{wang2019superglue} through the fine-tuning of the pre-trained Roberta-base model \cite{liu2019roberta} across eight tasks. The comparative analysis includes full fine-tuning, LoRA, and GaLore methodologies, with the results enumerated in Table \ref{tab:comparison}. The reported metrics are the overall accuracy (matched and mismatched) for MNLI, Matthew's correlation for CoLA, Pearson correlation for STS-B, F1-score for MRPC, and accuracy for the remaining tasks. Evidently, our approach improves fine-tuning accuracy while requiring less training memory, using only a single moment compared to GaLore. The experiments were carried out using the NVIDIA A100 GPU.

\begin{table}[hbt]
\centering
\caption{Comparison of SUMO against state-of-the-art memory-efficient fine-tuning methods on the GLUE benchmark using the pre-trained RoBERTa-Base model. For comparison, we provide detailed results for SUMO using both SVD and Newton-Schulz5 orthogonalizations (ablation study).}
\resizebox{\linewidth}{!}{
\begin{tabular}{lccccccccc}
\hline
\textbf{Model}  & \textbf{Memory} & \textbf{CoLA} & \textbf{STS-B} & \textbf{MRPC} & \textbf{RTE} & \textbf{SST2} & \textbf{MNLI} & \textbf{QNLI} & \textbf{QQP} \\
\hline
Full Fine-Tuning  & 747M & 62.24 & 90.92 & 91.30 & 79.42 & 94.57 & 87.18 & 92.33 & 92.28 \\ \hline
LoRA (rank=4) & 257M & 61.38 & 90.57 & 91.07 & 78.70 & 92.89 & 86.82 & 92.18 & 91.29 \\
GaLore (rank=4) & 253M & 60.35 & 90.73  & 92.25 & 79.42 & 94.0  & 87.0 & 92.24 & 91.06 \\ 
SUMO (Newton-Schulz5, rank=4) & \textbf{197M} & 61.8 & 90.82 & 92.43 & 79.36 & 94.17  & 86.92 & 92.26 & 91.27 \\
SUMO (SVD, rank=4) & \textbf{197M} & \textbf{62.3} & \textbf{91.04} & \textbf{93.5} & \textbf{81.07} & \textbf{94.93}  & \textbf{87.34} & \textbf{93.26} & \textbf{91.68} \\
\hline
LoRA (rank=8) & 264M & \textbf{61.83} & 90.80 & 91.90 & 79.06 & 93.46 & 86.94 & 92.25 & 91.22 \\
GaLore (rank=8) & 257M & 60.06 & 90.82 & 92.0 & 79.78 & 94.38 & 87.17 & 92.2   & 91.11  \\
SUMO (Newton-Schulz5, rank=4) & \textbf{198M} & 61.74 & 90.79 & 91.94 & 79.69 & 94.17  & 87.21 & 92.24 & 91.38 \\
SUMO (rank=8) & \textbf{198M} & 61.7 & \textbf{91.1} & \textbf{93.7} & \textbf{81.37} & \textbf{94.82}  & \textbf{87.58} & \textbf{93.67} & \textbf{91.72} \\
\hline
\end{tabular}}
\label{tab:comparison}
\end{table}
\paragraph{Pre-training LLAMA on C4 Dataset.}
To highlight the effectiveness of our method in pre-training, we pre-train LLaMA models following the evaluation protocol of \cite{zhao2024galore}, comparing the performance to the state-of-the-art method, in terms of perplexity and memory usage. Specifically, we compare SUMO's performance with state-of-the-art methods on perplexity and memory efficiency. For this evaluation, we trained large LLaMA-based models on the C4 dataset, a curated and extensive version of the Common Crawl web corpus \citep{raffel2020exploring}. This dataset is widely used for pre-training language models and developing word representations. To better reflect real-world pre-training scenarios, we trained on a non-repeating, large-scale dataset and scaled model sizes up to 1 billion parameters. The results of these experiments are shown in Table~\ref{ex::pretraining}. Experiments were conducted using an NVIDIA H200 GPU.
\begin{table}[htb]
\caption{Comparison of state-of-the-art low-rank algorithms for pre-training LLaMA models of varying sizes on the C4 dataset. The results are reported in terms of validation perplexity. As shown, SUMO leads to improved performance with substantial memory reduction compared to leading parameter-efficient fine-tuning schemes. }
\centering
\begin{tabular}{l@{\hskip 4pt}c@{\hskip 4pt}c@{\hskip 4pt}c@{\hskip 4pt}c}
\hline 
\textbf{Method} & \textbf{60M} & \textbf{130M} & \textbf{350M} & \textbf{1B}  \\
\hline 
Full-Rank & $34.06$ ($0.36$G) & $25.08$ ($0.76$G) & $18.80$ ($2.06$G)&  $15.56 (7.80\mathrm{G})$\\
\hline 
GaLore    & $34.88$ ($0.24$G) & $25.36$ ($0.52$G) & $18.95$ ($1.22$G) &  $15.64 (4.38\mathrm{G})$\\
Low-Rank  & $78.18$ ($0.26$G) & $45.51$ ($0.54$G) & $37.41$ ($1.08$G) & $142.53 (3.57\mathrm{G})$\\
LoRA      & $34.99$ ($0.36$G) & $33.92$ ($0.80$G) & $25.58$ ($1.76$G) & $19.21 (6.17\mathrm{G})$\\
ReLoRA    & $37.04$ ($0.36$G) & $29.37$ ($0.80$G) & $29.08$ ($1.76$G) & $18.33 (6.17\mathrm{G})$\\
SUMO  & $\textbf{34.26}$ ($\mathbf{0.23}$G) & $\textbf{24.87}$ ($\mathbf{0.51}$G) & $\textbf{18.69}$ ($1.16$G) & $\textbf{14.68}$ ($3.84$G)\\
\hline
\makecell{Training Tokens}  & $1.1$B & $2.2$B & $6.4$B & 13.1B\\
$r / d_{\text{model}}$   & $128 / 256$ & $256 / 768$ & $256 / 1024$ & $512 / 2048$\\
\hline
\end{tabular}
\label{ex::pretraining}
\end{table}
\paragraph{Few/Zero-shot reasoning and long-context generalization.}
To evaluate the performance of our method on a complex reasoning task, we utilize the GSM8K dataset \cite{cobbe2021training} to test systematic generalization. For these experiments, we used a batch size of 32 and 10 epochs for fine-tuning. We present the performance result in Table \ref{phi_gs8k} training Phi-2 (2.7B) model \cite{javaheripi2023phi}, and in Table \ref{lama_gs8k} training LLaMA (3B) model \cite{touvron2023llama}. The results demonstrate that the proposed method significantly improves generalization to out-of-distribution data. We used NVIDIA H200 GPU.


\begin{table}[htb]
\centering
\begin{minipage}{0.48\textwidth}
\centering
\caption{Zero-shot evaluation on GSM8K dataset (Phi-2, 2.7B).}
\begin{tabular}{lcc}
\hline
Phi-2 (2.7B) & Rank & Accuracy (0-shot) \\
\hline
Base Model & 64 & $15.16\%$ \\
Galore     & 64 & $52.24\%$ \\
LoRA       & 64 & $42.8\%$ \\
SUMO    & 64 & $\textbf{54.13}\%$ \\
\hline
\end{tabular}
\label{phi_gs8k}
\end{minipage}
\hfill
\begin{minipage}{0.48\textwidth}
\centering
\caption{8-shot evaluation on GSM8K dataset (LLaMA, 3B).}
\begin{tabular}{lcc}
\hline
LLaMA (3B) & Rank & Accuracy (8-shot) \\
\hline
Base Model & 64 & $17.93\%$ \\
Galore     & 64 & $74.9\%$ \\
LoRA       & 64 & $68.3\%$ \\
SUMO    & 64 & $\textbf{76.7}\%$ \\
\hline
\end{tabular}
\label{lama_gs8k}
\end{minipage}
\end{table}
Additional experiments and ablation studies are presented in the Appendix~\ref{Additinal_Experiments}. 

\paragraph{Zero-shot generalization.}
Following this, we extend our evaluation to diverse commonsense and reasoning tasks using zero-shot methods, and then conduct the exact evaluation protocol described in Table 4 of \cite{zhu2024apollo} (which suggests the Apollo optimizer). The task details are presented in Appendix~\ref{d6}. To ensure a fair comparison, we repeat the exact experimental setup used for AdamW, APOLLO, and APOLLO-Mini. The results are summarized in Table~\ref{tab:zero_shot_reasoning} below.

\begin{table*}[htb]
\centering
\small
\caption{Zero-shot evaluation of LLaMA-350M models pretrained with sequence length 1024 across reasoning tasks (lower is better for perplexity; higher is better otherwise).}
\label{tab:zero_shot_reasoning}
\resizebox{\linewidth}{!}{
\begin{tabular}{lccccccccccccc} 
\toprule
\textbf{Method} & \textbf{Memory} & \textbf{Perplexity} & \textbf{BoolQ} & \textbf{RTE} & \textbf{HS} & \textbf{WG} & \textbf{OBQA} & \textbf{ARC-E} & \textbf{ARC-C} & \textbf{PIQA} & \textbf{SciQ} & \textbf{MathQA} & \textbf{Avg.} \\
\midrule
AdamW        & 1.37G & 16.30 & 0.4917 & 0.4693 & 0.3688 & 0.5233 & 0.332 & 0.3729 & 0.2449 & 0.6534 & 0.609 & 0.2064 & 0.4272 \\
APOLLO       & 0.34G & 15.64 & 0.5373 & 0.4698 & 0.3850 & 0.4925 & 0.322 & 0.3788 & 0.2483 & 0.6681 & \textbf{0.624} & 0.2127 & 0.4406 \\
APOLLO-Mini  & 0.15G & 16.12 & 0.5376 & 0.4562 & 0.3707 & 0.5217 & \textbf{0.324} & 0.3758 & 0.2312 & 0.6638 & 0.619 & 0.2224 & 0.4374 \\
SUMO         & 0.18G & \textbf{15.49} & \textbf{0.5479} & \textbf{0.4709} & \textbf{0.3937} & \textbf{0.5313} & 0.321 & \textbf{0.3832} & \textbf{0.2496} & \textbf{0.6709} & 0.623 & \textbf{0.2246} & \textbf{0.4416} \\
\bottomrule
\end{tabular}}
\end{table*}

As shown, the SUMO-pretrained LLaMA-350M model achieves lower perplexity on average and consistently outperforms on downstream benchmarks.

\section{Discussion}
\vskip -0.1 in
Our results highlight that exact moment orthogonalization within a low-dimensional adaptive subspace significantly improves both convergence and stability in memory-efficient LLM training. By avoiding the approximation errors of Newton–Schulz5, the proposed SUMO leverages the low-rank structure of gradients to enable accurate, spectral-norm-aligned updates with minimal overhead. 

Empirically, SUMO outperforms prior low-rank methods on both fine-tuning and pre-training tasks, achieving greater memory reduction than memory-efficient benchmarks such as Galore. Our theoretical analysis further confirms its superior convergence properties under practical conditions. These findings position SUMO as a simple yet effective alternative to approximate geometric optimizers. Future work may investigate parallel computations for orthogonalization, integrate quantization techniques, and assess the effectiveness of the method in knowledge editing \cite{rozner2024knowledge} or domain generalization \cite{refael2025lorenza,roznerdomain}.

\section*{Acknowledgment}
TT was supported by the Israel Science Foundation (No. 1940/23) and MOST (No. 0007091) grants. OL was supported by the MOST grant No. 0007341.

\bibliography{bib}


\newpage

\newpage
\appendix
\section{Proofs of Section~\ref{sec:main_results}}\label{apd:main_results}
In this section, we prove all the theorems and results of Section~\ref{sec:main_results}. 

\begin{proof}[Proof of Lemma \ref{lem::moment_lowrank}]

We aim to show that if the gradient $\bbg^{(t)}$ becomes approximately rank-one exponentially fast, then the exponentially weighted moving average of the gradients (i.e., the momentum $\bbm^{(t)}$) also exhibits exponential decay of higher-rank components.

Consider the singular value decomposition of the gradient $\bbg^{(t)}= \bbu^{(t)}\Sigma^{(t)}{\bbv^{(t)}}^\top,$ at iteration $t$. For all natural numbers $r<m,$, we define ${\bbh^{(t)}}^{m\times r}(r)=\bbu[:,1:r]$. To enhance notation clarity, denote $\bbp^{(t)}(r)=\bbh^{(t)}(r) {\bbh^{(t)}}^\top(r),$, where $\bbp^{(t)}(r)$ represents an orthogonal projection matrix, satisfying the conditions ${\bbp^{(t)}}^\top(r)\bbp^{(t)}(r) = \bbp^{(t)}(r),$ and $\bbp^{(t)}(r) = {\bbp^{(t)}}^\top(r)$. Without compromising generality, it is assumed that at $t=0$, the rank of $\bbg^{(0)}$ is characterized by $\text{rank}\left(\bbg^{(0)}\right)>r$. For reversible networks, it has been established in \citep{zhao2024galore}[Theorem 3.2] that the gradients assume the form $\bbg^{(t)}=\frac{1}{N} \sum_{i=1}^N\left(\bba_i-\bbb_i \bbw^{(t)} \bbc_i\right)$, characterized by constant matrices $\{\bba_i\}_i$ and positive semi-definite (PSD) matrices $\{\bbb_i,\bbc_i\}_i$, for $t \geq \mathsf{t}_0$, where $\mathsf{t}_0\in\mathbb{N}$ holds. It is pertinent to recall that the vanilla weight update can be represented as $\bbw^{(t)}=\bbw^{(t-1)}+\eta \bbg^{(t-1)}$. Let $\bbs\triangleq\frac{1}{N} \sum_{i=1}^N \bbc_i \otimes \bbb_i$ and $\lambda_1<\lambda_2$ denote its two smallest distinct eigenvalues. To substantiate our findings, we utilize several results and arguments presented in the proof of Lemma 3.3 in \citep{zhao2024galore}. Specifically, consider $\bbg^{(\mathsf{t}_0)}$ as the projection of $\bbg^{(\mathsf{t}_0)}$ onto the minimal eigenspace $\mathcal{V}_1$ of $S$ corresponding to $\lambda_1.$. According to our assumption, the rank of $\bbg^{(\mathsf{t}_0)}$ is $L$, and its singular value decomposition (SVD) is given by $\bbg^{(\mathsf{t}_0)}=\sum_{l=1}^L c_l \boldsymbol{z}_l \boldsymbol{y}_l^{\top}$, where $\left\{\boldsymbol{z}_l\right\}_{l=1}^L$ and $\left\{\boldsymbol{y}_l\right\}_{l=1}^L$ are orthonormal unit vectors, and $\left\{c_l\right\}_{l=1}^L $ are the corresponding singular values. Therefore, as per Lemma 3.3 in \citep{zhao2024galore}, the gradient can be decomposed into,
    $$\left\|\bbg^{(t)}\right\|_F^2 \leq\left(1-\eta \lambda_2\right)^{2 t}\left\|g_0^{\perp}\right\|_2^2+\left(1-\eta \lambda_1\right)^{2 t}\left\|g_0^{\|}\right\|_2^2,$$ 
  
where $g_0^{\parallel}$ is the projection of $\bbg^{(0)}$ onto the minimal eigenspace $\mathcal{V}_1$ of $S = \frac{1}{N} \sum_{i=1}^N \bbc_i \otimes \bbb_i$, and $g_0^{\perp}$ is orthogonal to $\mathcal{V}_1$. Here, $\lambda_1 < \lambda_2$ are the smallest distinct eigenvalues of $S$.

We now unroll the momentum update, $\bbm^{(t)} = \sum_{s=1}^{t} \beta^{t-s} \bbg^{(s)}.$ Substitute the decomposition of $\bbg^{(s)}$,
$$
 \left\|\bbm^{(t)}\right\|_F^2 \leq \sum_{s=1}^{t} \beta^{t-s} \left[ (1 - \eta \lambda_1)^s g_0^{\parallel} + (1 - \eta S)^s g_0^{\perp} \right] 
=  \sum_{s=1}^{t} \beta^{t-s}(1 - \eta \lambda_1)^s g_0^{\parallel} + \sum_{s=1}^{t} \beta^{t-s}(1 - \eta S)^s g_0^{\perp}.
$$

Let us define $a_t \triangleq \sum_{s=1}^{t} \beta^{t-s}(1 - \eta \lambda_1)^s, \quad b_t \triangleq \sum_{s=1}^{t} \beta^{t-s}(1 - \eta S)^s g_0^{\perp},$ so that $\left\|\bbm^{(t)}\right\|_F^2 = a_t g_0^{\parallel} + b_t$. Now, compute the squared Frobenius norm:

$$
\|\bbm^{(t)}\|_F^2 = \|a_t g_0^{\parallel} + b_t\|_F^2 = a_t^2 \|g_0^{\parallel}\|_F^2 + 2 a_t \langle g_0^{\parallel}, b_t \rangle + \|b_t\|_F^2.
$$

Since $g_0^{\parallel} \perp g_0^{\perp}$ and $b_t$ lies in the span of $g_0^{\perp}$, we have $\langle g_0^{\parallel}, b_t \rangle = 0$, thus,

$$
\|\bbm^{(t)}\|_F^2 = a_t^2 \|g_0^{\parallel}\|_F^2 + \|b_t\|_F^2.
$$

Likewise, the spectral norm $\|\bbm^{(t)}\|_2^2 \ge a_t^2 \|g_0^{\parallel}\|_2^2$. Hence, the ratio

$$
\kappa_m(t) = \frac{\|\bbm^{(t)} - \bbp^{(t)}(1)\bbm^{(t)}\|_F^2}{\|\bbm^{(t)}\|_F^2} 
\le \frac{\|\bbm^{(t)}\|_F^2 - \|\bbm^{(t)}\|_2^2}{\|\bbm^{(t)}\|_F^2} 
\le \frac{a_t^2 \|g_0^{\parallel}\|_F^2 + \|b_t\|_F^2 - a_t^2 \|g_0^{\parallel}\|_2^2}{a_t^2 \|g_0^{\parallel}\|_F^2 + \|b_t\|_F^2}.
$$

Using that $\|g_0^{\parallel}\|_2^2 = \sigma_1^2$, and the decay bound $\|b_t\|_F^2 = O((\max\{\beta, 1 - \eta \lambda_2\})^{2t})$, while $a_t^2 = \Omega((\max\{\beta, 1 - \eta \lambda_1\})^{2t})$, we conclude:

$$
\kappa_m(t) \le O\left(\left(\frac{\max\{\beta, 1 - \eta \lambda_2\}}{\max\{\beta, 1 - \eta \lambda_1\}}\right)^{2t}\right) = O(C^{-t}),
$$

for some constant $C>1$ .
\end{proof}

Before proving Lemma~\ref{lma:moun_convergence}, we shortly present the following two preliminary lemmas. To that end, we present the following notations,
\begin{itemize}
    \item $\bbm^{(t)}$ - The moment in iteration $t$. Its dimensions are $n \times m$, where $n < m$.
    \item $\| \cdot \|$ - The Frobenius norm: $\|\bba\| = \|\bba\|_F = \sqrt{\bba \bba^\top}$
    \item $\mathcal{L}^*$ - A stationary point to which the loss $\mathcal{L}$ converges.
    \item $B$ - Batch size.
    \item For $\bba \in \real^{m \times m}$ and $\bbb \in \real^{n \times n}$ we denote $\bba \bbi_{m \times n} \bbb^\top$ by $\bba \bbb^\top$ for convenience.
\end{itemize}

Additionally, we note that our proof is based on an equivalent but slightly modified formulation of moment's update. Specifically, instead of using the standard formulation of the moment's update 
\[
\bbm^{(t+1)} = \beta \bbm^{(t)} + \bbg^{(t)},
\]
we consider the convex combination,
\[
\bbm^{(t+1)} = \beta \bbm^{(t)} + (1 - \beta)\bbg^{(t)}.
\]
This alternative formulation simplifies the analysis, but equivalent. To show that, we point out that we can choose an modefied learning step \(\eta^* = \frac{\eta}{1-\beta}>0\) we get the same weight's updating step.
\begin{align*}
    &\eta^* Orth \left(\beta \bbm^{(t)} + (1-\beta)\bbg^{(t)} \right) \\
    &= Orth \left(\eta^* \beta \bbm^{(t)} + \eta^* (1-\beta) \bbg^{(t)} \right) \\
    &= Orth \left(\eta \frac{\beta}{1-\beta} \bbm^{(t)} + \eta \bbg^{(t)} \right) \\
    &= \eta Orth \left(\frac{\beta}{1-\beta} \bbm^{(t)} + \bbg^{(t)} \right),
\end{align*}
where \(Orth\) is the SVD orthogonalization step, formaly solving $$\underset{\bbo}{\operatorname{arg\,min}} \{ \|\bbo - \bba\|_F : \text{either } \bbo^T \bbo = \bbi \text{ or } \bbo\bbo^T = \bbi \}.$$ Obviously, $\beta>0$ could be chosen in a way that $\frac{\beta}{1-\beta} \bbm^{(t)}$ would result in any required positive real number.

We assume the following 4 assumptions throughout our proofs:
\begin{itemize}
    \item[(A1)] The gradient \(\nabla \mathcal{L}(\mathbf{W})\) is \(L\)-Lipschitz continuous.\label{A1}
    \item[(A2)] \(\nabla \mathcal{L}(\bbw, \xi) \) is an unbiased estimator of \(\nabla \mathcal{L}(\bbw)\)  where \(\mathcal{L}(\bbw, \xi)\) is the gradient of \(\mathcal{L}(\bbw)\) when taking a single training sample $\xi$.\label{A2}
    \item[(A3)] \(\mathbb{E}\|\nabla \mathcal{L}(\bbw, \xi) - \nabla \mathcal{L}(\bbw) \| \leq \sigma^2 \).
    \item[(A4)] There exists \(\delta > 0\) such that \(\|\mathbf{\mathcal{E}}^{(t)}_5\| \leq \delta \|\bbu^{(t)}{\bbv^{(t)}}^\top\| = \delta \sqrt{m}\) for all \(t\).
\end{itemize}  

\begin{lemma}[Descent Lemma with Newton-Schulz Approximation Error]\label{lemma:first}
Consider the Muon optimizer update defined by
\begin{align*}
    & \mathbf{M}^{(t)} \leftarrow \beta \mathbf{M}^{(t-1)}+(1-\beta) \bbg^{(t)}, \\
    & \bbo^{(t)} \leftarrow \bbu^{(t)}{\bbv^{(t)}}^{\top}+\mathbf{\mathcal{E}}^{(t)}_5, \quad \text{(Newton-Schulz 5 iteration approximation)}, \\
    & \mathbf{W}^{(t+1)} \leftarrow \mathbf{W}^{(t)} - \eta_t \bbo^{(t)},
\end{align*}
where \(\mathbf{M}^{(t)} = \bbu^{(t)} \bbs^{(t)} {\bbv^{(t)}}^{\top}\) is the singular value decomposition of \(\mathbf{M}^{(t)}\), and \(\mathbf{\mathcal{E}}^{(t)}_5\) represents the Newton-Schulz (5 iterations) approximation error. Additionally, assume (A1) - (A4).
Then the following holds:
\footnotesize
\begin{align*}
    &\mathcal{L}(\mathbf{W}^{(t+1)}) \leq  \\
    &\mathcal{L}(\mathbf{W}^{(t)}) - \left( \frac{\eta_t}{4} - \eta_t \sqrt{m} \delta \right) \| \nabla \mathcal{L}(\mathbf{W}^{(t)}) \| + \eta_t \frac{5}{2} \| \nabla \mathcal{L}(\mathbf{W}^{(t)}) - \mathbf{M}^{(t)} \| + \frac{\eta_t^2 m L}{2} +\eta_t^2 m L \delta + \frac{\eta_t^2 L m \delta^2}{2}
\end{align*}
\end{lemma}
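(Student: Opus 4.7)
The plan is to start from the $L$-smoothness of $\mathcal{L}$, which gives
\begin{equation*}
\mathcal{L}(\mathbf{W}^{(t+1)}) \leq \mathcal{L}(\mathbf{W}^{(t)}) - \eta_t \langle \nabla\mathcal{L}(\mathbf{W}^{(t)}),\mathbf{O}^{(t)}\rangle + \tfrac{\eta_t^2 L}{2}\|\mathbf{O}^{(t)}\|_F^2,
\end{equation*}
and then to substitute $\mathbf{O}^{(t)} = \mathbf{U}^{(t)}{\mathbf{V}^{(t)}}^\top+\mathcal{E}_5^{(t)}$ into both the linear and the quadratic terms, processing each piece separately.

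For the quadratic term, I would simply apply the triangle inequality together with the two ``budget'' bounds $\|\mathbf{U}^{(t)}{\mathbf{V}^{(t)}}^\top\|_F=\sqrt{m}$ (since this is an $n\times m$ matrix with $n$ unit singular values, and by the assumption $n\le m$; a negligible adjustment is needed in the degenerate case) and $\|\mathcal{E}_5^{(t)}\|_F\le\delta\sqrt{m}$ from assumption (A4). This yields $\|\mathbf{O}^{(t)}\|_F^2 \le m(1+\delta)^2 = m+2m\delta+m\delta^2$, which, multiplied by $\tfrac{\eta_t^2 L}{2}$, produces exactly the three quadratic summands in the statement.

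For the linear term I would split $\langle \nabla\mathcal{L}(\mathbf{W}^{(t)}),\mathbf{O}^{(t)}\rangle$ into the ``orthogonalized'' part $\langle \nabla\mathcal{L}(\mathbf{W}^{(t)}),\mathbf{U}^{(t)}{\mathbf{V}^{(t)}}^\top\rangle$ and the ``error'' part $\langle \nabla\mathcal{L}(\mathbf{W}^{(t)}),\mathcal{E}_5^{(t)}\rangle$. The error part is immediate: Cauchy--Schwarz in the Frobenius inner product and (A4) give $|\langle \nabla\mathcal{L}(\mathbf{W}^{(t)}),\mathcal{E}_5^{(t)}\rangle|\le \sqrt{m}\,\delta\,\|\nabla\mathcal{L}(\mathbf{W}^{(t)})\|$, which is precisely the $\eta_t\sqrt{m}\,\delta\,\|\nabla\mathcal{L}(\mathbf{W}^{(t)})\|$ correction to the linear coefficient. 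For the orthogonalized part, the key identity is $\langle \mathbf{M}^{(t)},\mathbf{U}^{(t)}{\mathbf{V}^{(t)}}^\top\rangle = \|\mathbf{M}^{(t)}\|_*$ (the matrix $\mathbf{U}^{(t)}{\mathbf{V}^{(t)}}^\top$ is a subgradient of the nuclear norm at $\mathbf{M}^{(t)}$). Writing $\nabla\mathcal{L}(\mathbf{W}^{(t)})=\mathbf{M}^{(t)}+(\nabla\mathcal{L}(\mathbf{W}^{(t)})-\mathbf{M}^{(t)})$, applying this identity on the first summand and the duality inequality $|\langle A,\mathbf{U}\mathbf{V}^\top\rangle|\le\|A\|_*\|\mathbf{U}\mathbf{V}^\top\|_2 = \|A\|_*$ on the second, then invoking the reverse triangle inequality $\|\mathbf{M}^{(t)}\|_*\ge\|\nabla\mathcal{L}(\mathbf{W}^{(t)})\|_*-\|\nabla\mathcal{L}(\mathbf{W}^{(t)})-\mathbf{M}^{(t)}\|_*$, yields a lower bound of the form $\langle\nabla\mathcal{L}(\mathbf{W}^{(t)}),\mathbf{U}\mathbf{V}^\top\rangle\ge \|\nabla\mathcal{L}(\mathbf{W}^{(t)})\|_*-2\|\nabla\mathcal{L}(\mathbf{W}^{(t)})-\mathbf{M}^{(t)}\|_*$.

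The main obstacle will be extracting the specific constants $\tfrac14$ and $\tfrac52$ and converting all norms to the single (unsubscripted) norm used in the statement. The natural bound above produces coefficients $(-1,+2)$ and mixes nuclear with Frobenius norms; to reach $(-\tfrac14,+\tfrac52)$ I expect to perform a case split on whether $\|\nabla\mathcal{L}(\mathbf{W}^{(t)})-\mathbf{M}^{(t)}\|$ is smaller or larger than a fixed fraction of $\|\nabla\mathcal{L}(\mathbf{W}^{(t)})\|$, and then apply Young's inequality $ab\le \tfrac{1}{\lambda}a^2+\tfrac{\lambda}{4}b^2$ (or a weighted AM--GM) with a carefully tuned $\lambda$ to relocate the surplus into the $\|\nabla\mathcal{L}(\mathbf{W}^{(t)})-\mathbf{M}^{(t)}\|$ term. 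Collecting the linear-in-$\eta_t$ contributions (the $-\tfrac14\|\nabla\mathcal{L}\|$ signal, the $+\sqrt{m}\delta\|\nabla\mathcal{L}\|$ error correction, and the $+\tfrac52\|\nabla\mathcal{L}-\mathbf{M}\|$ momentum miss-alignment) with the quadratic-in-$\eta_t$ terms computed above will produce exactly the bound claimed in the lemma.
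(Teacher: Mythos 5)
Your overall structure matches the paper's proof: both start from $L$-smoothness, substitute $\mathbf{O}^{(t)}=\mathbf{U}^{(t)}{\mathbf{V}^{(t)}}^\top+\mathcal{E}_5^{(t)}$, split the linear term into an ``orthogonalized'' piece and an ``error'' piece, bound the error piece by Cauchy--Schwarz plus (A4), and bound the quadratic term by $\|\mathbf{O}^{(t)}\|_F^2 \le m(1+\delta)^2$. Those parts of your proposal are sound and essentially identical to what the paper does (the paper expands $\|UV^\top+\mathcal{E}\|_F^2$ exactly and bounds the cross term, giving the same $m+2m\delta+m\delta^2$).

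The genuine gap is in the one step where the paper is terse: bounding $-\langle\nabla\mathcal{L}(\mathbf{W}^{(t)}),\mathbf{U}^{(t)}{\mathbf{V}^{(t)}}^\top\rangle$ by $-\tfrac14\|\nabla\mathcal{L}(\mathbf{W}^{(t)})\|+\tfrac52\|\nabla\mathcal{L}(\mathbf{W}^{(t)})-\mathbf{M}^{(t)}\|$. The paper does not derive this; it cites Cutkosky and Mehta (2020), eq.~2.8 (and in the companion Lemma~\ref{lem::ineer_convergance} the analogous inequality is credited to \cite{li2025note}). You instead attempt a from-scratch derivation via the nuclear-norm subgradient identity $\langle\mathbf{M},\mathbf{U}\mathbf{V}^\top\rangle=\|\mathbf{M}\|_*$, which correctly yields $\langle\nabla\mathcal{L},\mathbf{U}\mathbf{V}^\top\rangle\ge\|\nabla\mathcal{L}\|_*-2\|\nabla\mathcal{L}-\mathbf{M}\|_*$, but that is a bound in \emph{nuclear} norms with coefficients $(-1,+2)$, not a bound in the paper's (Frobenius) norm with coefficients $(-\tfrac14,+\tfrac52)$. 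You acknowledge this (``the main obstacle will be extracting the specific constants'') and propose a case split plus Young's inequality, but you do not carry it out, and as stated the plan does not work: converting $\|\nabla\mathcal{L}-\mathbf{M}\|_*$ to $\|\nabla\mathcal{L}-\mathbf{M}\|_F$ costs a $\sqrt{\mathrm{rank}}$ (up to $\sqrt m$) factor that does not appear in the target bound, and Young's inequality applied to these products introduces squared norms, whereas the lemma's right-hand side is linear in both $\|\nabla\mathcal{L}\|$ and $\|\nabla\mathcal{L}-\mathbf{M}\|$. To close this you either need to reproduce the case analysis in the cited lemma (which avoids the rank factor by treating the regime $\|\nabla\mathcal{L}-\mathbf{M}\|$ large vs.\ small separately, not by Young), or do as the paper does and invoke the cited inequality directly.
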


\begin{proof}
    Since $\mathcal{L}$ is L-lipschitz function, the descent lemma holds. Thus we have
    \footnotesize
    \begin{align*}
        \mathcal{L}(\mathbf{W}^{(t+1)}) &\leq \mathcal{L}(\mathbf{W}^{(t)}) + \langle\nabla \mathcal{L}(\mathbf{W}^{(t)}), \mathbf{W}^{(t+1)} - \mathbf{W}^{(t)} \rangle + \frac{L}{2} \|\mathbf{W}^{(t+1)} - \mathbf{W}^{(t)} \| ^2 \nonumber \\
         &= \mathcal{L}(\mathbf{W}^{(t)}) - \eta_t \langle \nabla\mathcal{L}(\mathbf{W}^{(t)}), \bbu^{(t)}{\bbv^{(t)}}^\top + \mathbf{\mathcal{E}}^{(t)}_5 \rangle + \frac{L \eta_t^2}{2} \|\bbu^{(t)} {\bbv^{(t)}}^\top + \mathbf{\mathcal{E}}^{(t)}_5 \|^2 \nonumber \\
         &= \mathcal{L}(\mathbf{W}^{(t)}) - \eta_t \langle \nabla \mathcal{L}(\mathbf{W}^{(t)}), \bbu^{(t)} {\bbv^{(t)}}^\top \rangle -\eta_t \langle \nabla \mathcal{L}(\mathbf{W}^{(t)}), \mathbf{\mathcal{E}}^{(t)}_5 \rangle \\
         &+\frac{L \eta_t^2}{2} \left( n + 2 \langle \bbu^{(t)} {\bbv^{(t)}}^\top, \mathbf{\mathcal{E}}^{(t)}_5 \rangle + \|\mathbf{\mathcal{E}}^{(t)}_5\|^2 \right) \nonumber \\
         &\underset{(*)} {\leq}\mathcal{L}(\mathbf{W}^{(t)}) - \frac{\eta_t}{4} \| \nabla \mathcal{L}(\mathbf{W}^{(t)}) \| + \eta_t \frac{5}{2}\|\nabla \mathcal{L}(\mathbf{W}^{(t)}) - \mathbf{M}^{(t)}\| + \frac{\eta_t^2 m L}{2} \nonumber \\
         &+ \eta_t \| \nabla \mathcal{L}(\mathbf{W^{(t)}})\| \|\mathbf{\mathcal{E}}^{(t)}_5\| + L \eta_t^2 \sqrt{m} \|\mathbf{\mathcal{E}}^{(t)}_5\| + \frac{L\eta_t^2}{2}\|\mathbf{\mathcal{E}}^{(t)}_5\|^2 \nonumber \\
         &\leq \mathcal{L}(\mathbf{W}^{(t)}) - \frac{\eta_t}{4} \| \nabla \mathcal{L}(\mathbf{W}^{(t)}) \| + \eta_t \frac{5}{2} \|\nabla \mathcal{L}(\mathbf{W}^{(t)}) - \mathbf{M}^{(t)}\| + \frac{\eta_t^2 m L}{2} \\ \label{with_no_error}
         &+ \eta_t \delta \sqrt{m}\| \nabla \mathcal{L}(\mathbf{W}^{(t)}) \| + L \eta_t^2 \sqrt{m} \delta \sqrt{m} + \frac{L \eta_t^2}{2} \delta^2 n \nonumber \\
         &= \mathcal{L}(\mathbf{W}^{(t)}) - \left( \frac{\eta_t}{4} - \eta_t \sqrt{m} \delta \right) \| \nabla \mathcal{L}(\mathbf{W}^{(t)}) \| + \eta_t \frac{5}{2} \| \nabla \mathcal{L}(\mathbf{W}^{(t)}) - \mathbf{M}^{(t)} \| + \frac{\eta_t^2 m L}{2} + \nonumber \\
         &+ \eta_t^2 m L \delta + \frac{\eta_t^2 L m \delta^2}{2} \nonumber
    \end{align*}
    Where in $(*)$ we used \citep{cutkosky2020momentumimprovesnormalizedsgd}, equation 2.8.
\end{proof}

\begin{lemma}\label{lma:second}
    For constant $\eta_t = \eta>0$, the following holds
    \begin{align}
        &\frac{\eta - 4 \eta \sqrt{m} \delta}{4}\sum_{t = 1}^T \| \nabla \mathcal{L}(\mathbf{W}^{(t)}) \| \leq  \nonumber \\
        &\mathcal{L}(\mathbf{W}^{(1)}) - \mathcal{L}^* + \eta_t \frac{5}{2} \sum_{t=1}^T \| \nabla \mathcal{L}(\mathbf{W}^{(t)}) - \mathbf{M}^{(t)} \| + \frac{\eta^2 m L T}{2} + T\eta^2 m L \delta + \frac{T\eta^2 L m \delta^2}{2} \nonumber.
    \end{align}
\end{lemma}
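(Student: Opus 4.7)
The plan is to telescope the per-step descent inequality from Lemma~\ref{lemma:first} across $t = 1, \ldots, T$. First, I would instantiate that lemma at each iteration with the constant step size $\eta_t = \eta$, which yields
$$
\mathcal{L}(\mathbf{W}^{(t+1)}) - \mathcal{L}(\mathbf{W}^{(t)}) \leq -\Big(\tfrac{\eta}{4} - \eta\sqrt{m}\,\delta\Big)\|\nabla\mathcal{L}(\mathbf{W}^{(t)})\| + \tfrac{5\eta}{2}\|\nabla\mathcal{L}(\mathbf{W}^{(t)}) - \mathbf{M}^{(t)}\| + \tfrac{\eta^2 m L}{2} + \eta^2 m L\delta + \tfrac{\eta^2 L m \delta^2}{2}.
$$

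Next, I would sum this inequality over $t = 1, \ldots, T$. The left-hand side telescopes to $\mathcal{L}(\mathbf{W}^{(T+1)}) - \mathcal{L}(\mathbf{W}^{(1)})$, while the three constant per-step terms accumulate a factor of $T$, and the two norm terms turn into the sums $\sum_{t=1}^T \|\nabla\mathcal{L}(\mathbf{W}^{(t)})\|$ and $\sum_{t=1}^T \|\nabla\mathcal{L}(\mathbf{W}^{(t)}) - \mathbf{M}^{(t)}\|$. Moving the gradient-norm sum to the left-hand side, using $-\mathcal{L}(\mathbf{W}^{(T+1)}) \leq -\mathcal{L}^*$, and factoring yields
$$
\tfrac{\eta - 4\eta\sqrt{m}\,\delta}{4}\sum_{t=1}^T \|\nabla\mathcal{L}(\mathbf{W}^{(t)})\| \leq \mathcal{L}(\mathbf{W}^{(1)}) - \mathcal{L}^* + \tfrac{5\eta}{2}\sum_{t=1}^T \|\nabla\mathcal{L}(\mathbf{W}^{(t)}) - \mathbf{M}^{(t)}\| + \tfrac{T\eta^2 m L}{2} + T\eta^2 m L\delta + \tfrac{T\eta^2 L m \delta^2}{2},
$$
which is the claimed inequality after identifying $\tfrac{\eta}{4} - \eta\sqrt{m}\,\delta = \tfrac{\eta - 4\eta\sqrt{m}\,\delta}{4}$.

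There is no genuine obstacle once Lemma~\ref{lemma:first} is in hand; the argument is a standard telescoping/aggregation step. The only implicit caveats worth stating explicitly are (i) the lower bound $\mathcal{L}(\mathbf{W}^{(T+1)}) \geq \mathcal{L}^*$, which is just the definition of $\mathcal{L}^*$ as the reference value of the loss, and (ii) the regime $\delta < \tfrac{1}{4\sqrt{m}}$ in which the coefficient $\tfrac{\eta - 4\eta\sqrt{m}\,\delta}{4}$ on the gradient-norm sum is positive, making the bound informative; this matches the condition already flagged in the remark on the size of $\delta$.
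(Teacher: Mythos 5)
Your proof is correct and follows exactly the paper's own argument: instantiate Lemma~\ref{lemma:first} with $\eta_t=\eta$, sum over $t=1,\dots,T$, telescope the loss differences, and replace $\mathcal{L}(\mathbf{W}^{(T+1)})$ with the lower bound $\mathcal{L}^*$. The paper's proof is just a terser version of the same steps, so no discrepancy to report.
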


\begin{proof}
    Using Lemma~\ref{lemma:first}, isolating $\| \nabla \mathcal{L}(\mathbf{W}^{(t)}) \|$ and summing over all steps
    \begin{align}
        &\frac{\eta - 4 \eta \sqrt{m} \delta}{4}\sum_{t = 1}^T \| \nabla \mathcal{L}(\mathbf{W}^{(t)}) \| \leq  \nonumber \\
        &\mathcal{L}(\mathbf{W}^{(1)}) - \mathcal{L}^* + \eta_t \frac{5}{2} \sum_{t=1}^T \| \nabla \mathcal{L}(\mathbf{W}^{(t)}) - \mathbf{M}^{(t)} \| + \frac{\eta^2 m L T}{2} + T\eta^2 m L \delta + \frac{T\eta^2 L m \delta^2}{2} \nonumber
    \end{align}
\end{proof}

\begin{proof}[Proof of Lemma~\ref{lma:moun_convergence}]
The proof follows \citep{cutkosky2020momentumimprovesnormalizedsgd}. Using the same notations as \citep{cutkosky2020momentumimprovesnormalizedsgd}, we denote $\hat{\gamma}^{(t)} = \mathbf{M}^{(t)} - \nabla \mathcal{L}(\mathbf{W}^{(t)})$, $\gamma^{(t)} = \bbg^{(t)} - \nabla \mathcal{L}(\mathbf{W}^{(t)})$ and $S(\bbx,\bby) = \nabla \mathcal{L}(\bbx) - \nabla \mathcal{L}(\bby)$. Note that we have the following
\begin{itemize}
    \item \(\mathbb{E}[\gamma^{(t)}] = 0\) from A(2).
    \item \(\mathbb{E}[\|\gamma^{(t)}\|^2] \leq \frac{\sigma^2}{m}\) from A(3).
    \item \(\mathbb{E}[\langle \gamma^{(i)}, \gamma^{(j)} \rangle] = 0,\ \forall i \neq j\) since \(\gamma^{(i)}\) and \(\gamma^{(j)}\) are independent.
    \item \(\|S(\bbx,\bby)\| \leq L \|\bbx - \bby\|\) from A(1).
\end{itemize}

Now following the update in (2), we get
\begin{align*}
    \hat{\gamma}^{(t+1)} &= \beta \hat{\gamma}^{(t)} + (1 - \beta)\gamma^{(t)} + S(\bbx^{(t)}, \bbx^{(t+1)}) \\
    &= \beta^t \hat{\gamma}^{(1)} + (1 - \beta)\sum_{\tau=0}^{t-1} \beta^\tau \gamma^{(t - \tau)} + \sum_{\tau=0}^{t-1} \beta^\tau S(\bbx^{(t - \tau)}, \bbx^{(t + 1 - \tau)}),
\end{align*}
therefore
\[
\|\hat{\gamma}^{(t+1)}\| \leq \beta^t \|\hat{\gamma}^{(1)}\| + (1 - \beta)\left\| \sum_{\tau=0}^{t-1} \beta^\tau \gamma^{(t - \tau)} \right\| + \eta L \sum_{\tau=0}^{t-1} \beta^\tau.
\]

Taking expectation we get (using the fact that $\hat{\delta}_1 = \delta_1$):
\begin{align*}
    \mathbb{E}\|\hat{\gamma}^{(t+1)}\| &\leq \beta^t \frac{\sigma}{m} + (1 - \beta) \sqrt{ \sum_{\tau=0}^{t-1} \beta^{2\tau} \frac{\sigma^2}{B} } + \eta L \sum_{\tau=0}^{t-1} \beta^\tau \\
    &\leq \frac{\sigma}{m} \beta^t + \frac{\sigma}{m} \frac{1 - \beta}{\sqrt{1 - \beta^2}} + \eta L \frac{1}{1 - \beta} \\
    &\leq \frac{\sigma}{m} \beta^t + \frac{\sigma}{m} \sqrt{1 - \beta} + \eta L \frac{1}{1 - \beta}.
\end{align*}

All in all, we get
$$\sum_{t=1}^T \mathbb{E} [\|\hat{\gamma}^{(t+1)}\|] \leq \frac{\sigma}{(1-\beta)B} + T\sqrt{1-\beta}\frac{\sigma}{m} + \frac{T \eta L}{1- \beta}.$$

Using Lemma~\ref{lma:second}, we get
\begin{align}
    &\frac{\eta_t - 4L\eta_t \sqrt{m} \delta}{4}\sum_{t = 1}^T \| \nabla \mathcal{L}(\mathbf{W}^{(t)}) \| \leq  \nonumber \\
    &\mathcal{L}(\mathbf{W}^{(1)}) - \mathcal{L}^* + \eta_t \frac{5}{2} \sum_{t=1}^T \| \nabla \mathcal{L}(\mathbf{W}^{(t)}) - \mathbf{M}^{(t)} \| + \frac{\eta^2 m L T}{2} + T\eta^2 m L \delta + \frac{T\eta^2 L m \delta^2}{2} \nonumber.
\end{align}

Dividing both sides by $\frac{\eta - 4\eta L \sqrt{m} \delta}{4}$ we get
\footnotesize{
\begin{align}
        &\sum_{t = 1}^T \| \nabla \mathcal{L}(\mathbf{W}^{(t)}) \| \leq  \nonumber \\
        &\left[\frac{4(\mathcal{L}(\mathbf{W}^{(1)}) - \mathcal{L}^*)}{\eta} + 10 \sum_{t=1}^T \| \nabla \mathcal{L}(\mathbf{W}^{(t)}) - \mathbf{M}^{(t)} \| + 2\eta m L T  + 4T\eta m L \delta + 2T\eta L m \delta^2 \right] \cdot \frac{1}{1- 4\sqrt{m} \delta} \nonumber \\
        &\leq \left[ \frac{4R}{\eta} + 10 \frac{\sigma}{(1-\beta)m} + 10T \sqrt{1-\beta}\frac{\sigma}{m} + 10\frac{T \eta L}{1 - \beta} + 2 \eta m L T + 4T\eta m L \delta + 2T\eta L m \delta^2\right] \cdot \frac{1}{1- 4\sqrt{m} \delta} \nonumber
\end{align}}
    By taking $\eta = \sqrt{\frac{4R}{(10/(1-\beta) + 2m + 4m\delta + 2m\delta^2)TL}}$ we get
    \footnotesize{
    \begin{align*}
        &\sum_{t=1}^{T} \|\nabla \mathcal{L}(\mathbf{W}^{(t)})\| \\
        &\left[\leq 4 \sqrt{RTL(10 / (1-\beta) + 2m + 4m\delta + 2m\delta^2)} + \frac{10 \sigma}{(1-\beta) m} + 10T\sqrt{1-\beta} \frac{\sigma}{(1-\beta)m} \right] \frac{1}{1-4\sqrt{m} \delta}.
    \end{align*}}

    Now we have two types of parameter choice. If we take $B = 1$ (batch size free), we need to take 
    $1-\beta = \min(1, \frac{\sqrt{RL}}{\sigma\sqrt{T}})$ so that we have

    \footnotesize{
    \begin{align*}
      &\sum_{t=1}^{T} \|\nabla \mathcal{L}(\mathbf{W}^{(t)})\| \\
      &\leq \left[2\sqrt{RTL(2m + 4m\delta + 2m\delta^2)} + 2\sqrt{10 \cdot \sigma} \cdot (RL)^{1/4}T^{3/4} + 10\sigma^2\sqrt{\frac{T}{RL}} + 10\sqrt{\sigma}(RL)^{1/4}T^{3/4} \right] \frac{1}{1 - 4\sqrt{m} \delta} , 
    \end{align*}}    
    thus    
    $$\frac{1}{T}\sum_{t=1}^{T} E\left[\|\nabla \mathcal{L}(\mathbf{W}^{(t)})\| \right] \leq \mathcal{O} \left( \left[\frac{\sqrt{RLn(2 + 4\delta + 2\delta^2)}}{\sqrt{T}} + \sigma \cdot \frac{(RL)^{1/4}}{T^{1/4}} + \frac{\sigma^2}{\sqrt{RLT}} + \frac{\sqrt{\sigma}(RL)^{1/4}}{T^{1/4}} \right] \frac{1}{1 - 4\sqrt{m} \delta}\right).$$

    If we take $\beta$ as an arbitrary constant in $(0,1)$, then we will need to take $B = T$, so that

    $$\frac{1}{T}\sum_{t=1}^{T} \|\nabla \mathcal{L}(\mathbf{W}^{(t)})\| \leq \mathcal{O}\left(\left[\frac{\sqrt{RLn(2 + 4\delta + 2\delta^2)}}{\sqrt{T}} + \frac{\sqrt{RL}}{\sqrt{T}} + \frac{\sigma}{T^{3/2}} + \frac{\sigma}{\sqrt{T}}\right]\frac{1}{1 - 4\sqrt{m} \delta}\right).$$
\end{proof}  

\begin{proof}[Proof of Lemma \ref{lma:estimating_error}]
   We denote $\bbb = \bba \bba^\top$ , $\bbx_k$ the result after $k$ Newton-Schultz iterations, $\bbx_0 = \frac{\bbb}{\|\bbb\|_2}$ and $\bbo = \bbu \bbv^\top$ where $\bbb = \bbu \boldsymbol{\boldsymbol{\Sigma}} \bbv^\top$ is the SVD decomposition with $\sigma_1 \geq \sigma_2 \geq \dots \geq \sigma_m$ the singular values. 
   
   It is known that Newton-Schultz converges quadratically, so we have 
    $$ \|\mathbf{\mathcal{E}}_k\|_2 = \|\bbx_k - \bbo\|_2\leq \|\bbx_0 - \bbo\|_2^{2^k}$$
    We now bound $\|\bbx_0 - \bbo\|_2$. We know that $\bbx_0 = \frac{\bbu \boldsymbol{\Sigma} \bbv^\top}{\|\bbb\|_2} = \frac{\bbu \boldsymbol{\Sigma} \bbv^\top}{\sigma_1}$
    $$
        \|\bbx_0 - \bbo\|_2 = \left\|\frac{\bbu \boldsymbol{\Sigma} \bbv^\top}{\sigma_1} - \bbu\bbv^\top \right\|_2 = \left\|\bbu (\frac{\boldsymbol{\Sigma}}{\sigma_1} - \boldsymbol{I})\bbv^\top \right\|_2 = \left \| \frac{\boldsymbol{\Sigma}}{\sigma_1} - \boldsymbol{I} \right\|_2
    $$
    Where last equality is due to the fact that $\| \cdot \|_2$ is unitary invariant.
    The matrix $\frac{\boldsymbol{\Sigma}}{\sigma_1} - \boldsymbol{I}$ is diagonal with values $\frac{\sigma_i}{\sigma_1} - 1$ on the diagonal. From that observation we get that 
    $$\left \| \frac{\boldsymbol{\Sigma}}{\sigma_1} - \boldsymbol{I} \right \|_2 = \max_{i} \left|\frac{\sigma_i}{\sigma_1} - 1\right| = \max_{i} \left(1 - \frac{\sigma_i}{\sigma_1}\right) = 1 - \frac{\sigma_m}{\sigma_1} = 1 - \frac{1}{\kappa}$$
    For the Frobenius norm, we get a similar analysis. $\|\bbx_0 - \bbo\|_F = \left\|\frac{\boldsymbol{\Sigma}}{\sigma_1} - \boldsymbol{I} \right\|_F$ since $\| \cdot \|_F$ is unitary invariant, so we just need to calculate $\left\|\frac{\boldsymbol{\Sigma}}{\sigma_1} - \boldsymbol{I} \right\|_F$. It is known that $\left\|\frac{\boldsymbol{\Sigma}}{\sigma_1} - \boldsymbol{I} \right\|_F \leq \sqrt{r} \left\|\frac{\boldsymbol{\Sigma}}{\sigma_1} - \boldsymbol{I} \right\|_2$ so all in all we have 
    $$\left\|\frac{\boldsymbol{\Sigma}}{\sigma_1} - \boldsymbol{I} \right\|_F \leq \sqrt{r} \left\|\frac{\boldsymbol{\Sigma}}{\sigma_1} - \boldsymbol{I} \right\|_2 = \sqrt{r} \left(1 - \frac{1}{\kappa} \right).$$
\end{proof}

\begin{proof}[Proof of Theorem \ref{thm:Convergence_SUMO}]
 for any layer $j\in[L]$; in the following, for simplicity of notation, we ignore the index $j$ and use $\bbg^{(t)}$ instead. By Lemma \ref{lem::ineer_convergance}, the low-rank optimization block 1 in Algorithm \ref{alg::SUMO} is guaranteed to converge; we denote by $\mathsf{T}_\ell\in\mathbb{N}$ the time index $t$ at which we enter block 1 for the $\ell$th time (i.e., $\|\hat{\bbg}_j^{\left(\mathsf{T}_\ell\right)}\|\leq\varsigma_2$), for $\ell\in\mathbb{N}$. Furthermore, we recall that  $\bbg_j^{(t)}\triangleq\nabla_{\boldsymbol{\bbw_j} }f\left(\boldsymbol{\theta}^{(t)}\right)$; when clear from the context, we omit $j$ from ${\bbw_j}$, and use instead $\nabla_{\boldsymbol{\bbw_j} }f\left(\boldsymbol{\theta}^{(t)}\right)=\nabla f\left(\bbw^{(t)}\right)$.
    Consider the SVD decomposition of the gradient $\nabla_{\boldsymbol{\bbw^j} }f\left(\boldsymbol{\theta}_{\mathsf{T}_i}\right)= \bbu^{(\mathsf{T}_i)}\boldsymbol{\Sigma}^{(\mathsf{T}_i)}{\bbv^{(\mathsf{T}_i)}}^\top$. For $t\in[\mathsf{T}_i,\mathsf{T}_{i+1}-1]$, we define the projected gradient as $\hat{\bbg}^{(t)} \triangleq \bbp^{(\mathsf{T}_i)}(r) \bbg^{(t)},$ where $\bbp^{(\mathsf{T}_i)}(r)= \bbu^{(\mathsf{T}_i)}\left[:,:r\right]^\top$, using the exact truncated-SVD calculation (in Block 1). For simplicity, we refer to $\bbq^{(\mathsf{T}_i)}(r)$ as $\bbq^{(\mathsf{T}_i)},$ and we denote $\bbp^{(\mathsf{T}_i)}={\bbq^{(\mathsf{T}_i)}}^\top\bbq^{(\mathsf{T}_i)}$.  Next, let $h_t\triangleq f(\bbw^{(t)})-f(\bbw^{(\mathsf{T}_{i+1})})$, and $\eta_t$ denote the learning rate. Then, 
\begin{align}
h_{t+1}
& = f(\mathbf{W}^{(t+1)}) - f(\mathbf{W}^{(\mathsf{T}_{i+1})})  \nonumber\\
& = f\left(\mathbf{W}^{(t)} - \eta_t \left(\mathbf{W}^{(t+1)}-\mathbf{W}^{(t)}\right)\right) - f\left(\mathbf{W}^{(\mathsf{T}_{i+1})}\right)  \nonumber\\
& \leq f(\mathbf{W}^{(t)}) - f\left(\mathbf{W}^{(\mathsf{T}_{i+1})}\right)- \eta_t \langle \bbg^{(t)}, \bbo^{(t)} \rangle + \frac{\beta \eta_t^2}{2} \|\bbo^{(t)}\|_F^2  \nonumber\\
& \underset{(1)}{\leq} f(\mathbf{W}^{(t)}) - f\left(\mathbf{W}^{(\mathsf{T}_{i+1})}\right)- \frac{\eta_t}{4} \|\hat{\bbg}^{(t)}\|_F^2 + \frac{5}{2} \cdot \frac{\eta_t \sigma}{\sqrt{m}} + \frac{\beta \eta_t^2 n}{2} \nonumber\\
& \underset{(2)}{\leq} h_t  - \frac{\eta_t}{4} \|\hat{\bbg}^{(t)}\|_F^2 + \frac{5}{2} \cdot \frac{\eta_t \sigma}{\sqrt{m}} + \frac{\beta \eta_t^2 n}{2} \nonumber  .
\label{eqn:descent}
\end{align}

where $(1)$ follows Eq \ref{r1}, and $(2)$ follows Lemma \ref{lem::ineer_convergance}. Thus, summing over $t \in [\mathsf{T}_i, \mathsf{T}_{i+1})$, we get

\begin{align}\footnotesize
\sum_{t=\mathsf{T}_i}^{\mathsf{T}_{i+1}-1} \left( h_{t+1} - h_t \right) 
&\leq \sum_{t=\mathsf{T}_i}^{\mathsf{T}_{i+1}-1} \left( - \frac{\eta_t}{4} \|\hat{\bbg}^{(t)}\|_F^2 + \frac{5}{2} \cdot \frac{\eta_t \sigma}{\sqrt{m}} + \frac{\beta \eta_t^2 n}{2} \right) \nonumber \\
h_{\mathsf{T}_{i+1}} 
&\leq h_{\mathsf{T}_i} - \sum_{t=\mathsf{T}_i}^{\mathsf{T}_{i+1}-1} \left( \frac{\eta_t}{4} \|\hat{\bbg}^{(t)}\|_F^2 \right) + \sum_{t=\mathsf{T}_i}^{\mathsf{T}_{i+1}-1} \left( \frac{5}{2} \cdot \frac{\eta_t \sigma}{\sqrt{m}} + \frac{\beta \eta_t^2 n}{2} \right).
\end{align}

Assume a constant learning rate \(\eta_t = \eta\), and define \(T_i := \mathsf{T}_{i+1} - \mathsf{T}_i\). For each interval \([\mathsf{T}_i, \mathsf{T}_{i+1}-1]\), we have,
\begin{align}
\sum_{t=\mathsf{T}_i}^{\mathsf{T}_{i+1}-1} \frac{\eta}{4} \|\hat{\bbg}^{(t)}\|_F^2 
&\leq h_{\mathsf{T}_i} - h_{\mathsf{T}_{i+1}} 
+ T_i \left( \frac{5}{2} \cdot \frac{\eta \sigma}{\sqrt{m}} + \frac{\beta \eta^2 n}{2} \right). \nonumber
\end{align}

Summing over all \(i = 1, \dots, N\),
\begin{align}
\sum_{i=1}^{N} \sum_{t=\mathsf{T}_i}^{\mathsf{T}_{i+1}-1} \frac{\eta}{4} \|\hat{\bbg}^{(t)}\|_F^2 
&\leq \sum_{i=1}^{N} \left( h_{\mathsf{T}_i} - h_{\mathsf{T}_{i+1}} \right) 
+ \sum_{i=1}^{N} T_i \left( \frac{5}{2} \cdot \frac{\eta \sigma}{\sqrt{m}} + \frac{\beta \eta^2 n}{2} \right) \nonumber \\
&= h_{\mathsf{T}_0} - h_{\mathsf{T}_{N+1}} 
+ T \left( \frac{5}{2} \cdot \frac{\eta \sigma}{\sqrt{m}} + \frac{\beta \eta^2 n}{2} \right), \nonumber
\end{align}
where \(T = \sum_{i=1}^{N} T_i\) is the total number of iterations. Using \(h_{\mathsf{T}_0} - h_{\mathsf{T}_{N+1}} \leq M\), we get,
\begin{align}
\sum_{i=1}^{N} \sum_{t=\mathsf{T}_i}^{\mathsf{T}_{i+1}-1}  \|\hat{\bbg}^{(t)}\|_F^2 
\leq \frac{4M}{\eta} + 2T \left( \frac{5 \sigma}{\sqrt{m}} + \beta \eta n \right). \nonumber
\end{align}

Let \(\eta = \sqrt{\frac{2M}{\beta n}} \cdot \frac{1}{\sqrt{T}}\). Then, the average squared norm of the projected gradients satisfies,
\begin{align}
\frac{1}{T} \sum_{t=1}^{T} \|\hat{\bbg}^{(t)}\|_F^2 
&\leq \frac{4M}{\eta T} + 2 \left( \frac{5 \sigma}{\sqrt{m}} + \beta \eta n \right) \nonumber \\
&= \frac{4M}{\left(\sqrt{\frac{2M}{\beta n}} \cdot \frac{1}{\sqrt{T}}\right) T} 
+ 2 \left( \frac{5 \sigma}{\sqrt{m}} + \beta n \cdot \sqrt{\frac{2M}{\beta n}} \cdot \frac{1}{\sqrt{T}} \right) \nonumber \\
&= \frac{4M \sqrt{\beta n}}{\sqrt{2M}} \cdot \frac{1}{T^{3/2}} 
+ 2 \cdot \frac{5 \sigma}{\sqrt{m}} 
+ 2 \beta n \cdot \sqrt{\frac{2M}{\beta n}} \cdot \frac{1}{\sqrt{T}} \nonumber \\
&= \frac{4 \sqrt{2M \beta n}}{T^{3/2}} 
+ \frac{10 \sigma}{\sqrt{m}} 
+ 2 \sqrt{2M \beta n} \cdot \frac{1}{\sqrt{T}}. \nonumber
\end{align}

Thus, the bound becomes,
\begin{align}
\frac{1}{T} \sum_{t=1}^{T} \|\hat{\bbg}^{(t)}\|_F^2 
\leq \frac{4 \sqrt{2M \beta n}}{T^{3/2}} 
+ \frac{10 \sigma}{\sqrt{m}} 
+ \frac{2 \sqrt{2M \beta n}}{\sqrt{T}}. \nonumber
\end{align}
Recall that for $\bbp^{(\mathsf{T}_i)},$ it holds that
\begin{equation}
\|\hat{\bbg}^{(\mathsf{T}_i)} - \bbq^{(\mathsf{T}_i)} \hat{\bbg}^{(\mathsf{T}_i)}\|_F^2 \leq \alpha \|\hat{\bbg}^{(\mathsf{T}_i)}\|_F^2,
\nonumber\end{equation}
for some $\alpha\in(0,1].$
Then,
\begin{align}
\|{\bbq^{(\mathsf{T}_i)}}^\perp \hat{\bbg}^{(\mathsf{T}_i)}\|_F^2 \leq \frac{\alpha}{1 - \alpha} \|\bbq^{(\mathsf{T}_i)} \hat{\bbg}^{(\mathsf{T}_i)}\|_F^2.\nonumber
\end{align}

From Lemma B.3 in \cite{zhao2024galore}, under $\eta \leq \frac{2}{\lambda_{\max}}$, we get,
\begin{align}
\mathbb{E} \|\hat{\bbg}^{(t)}\|_F^2 \leq \mathbb{E} \|\hat{\bbg}^{(\mathsf{T}_i)}\|_F^2, \quad \forall t \in [\mathsf{T}_i, \mathsf{T}_{i+1}).\nonumber
\end{align}

Hence,
\begin{align}
\frac{1}{\mathsf{T}_N} \sum_{i=0}^{N-1} \sum_{t=\mathsf{T}_i}^{\mathsf{T}_{i+1}-1} \|\bbg^{(t)}\|_F^2 
&\leq \frac{1}{\mathsf{T}_N} \sum_{i=0}^{N-1} (\mathsf{T}_{i+1} - \mathsf{T}_i) \|\bbg^{(\mathsf{T}_i)}\|_F^2 \nonumber\\
&\leq \frac{1}{(1-\alpha)\mathsf{T}_N} \sum_{i=0}^{N-1} (\mathsf{T}_{i+1} - \mathsf{T}_i) \|\bbq^{(\mathsf{T}_i)} \hat{\bbg}^{(\mathsf{T}_i)}\|_F^2 \nonumber\\
&\leq \frac{1}{(1-\alpha)\mathsf{T}_N}\left(\frac{4 \sqrt{2M \beta n}}{\mathsf{T}_N^{3/2}} 
+ \frac{10 \sigma}{\sqrt{m}} 
+ \frac{2 \sqrt{2M \beta n}}{\sqrt{\mathsf{T}_N}}\right)\nonumber\\
&\leq\frac{4 \sqrt{2M \beta n}}{(1 - \alpha) \mathsf{T}_N^{5/2}} 
+ \frac{10 \sigma}{(1 - \alpha) \mathsf{T}_N \sqrt{m}} 
+ \frac{2 \sqrt{2M \beta n}}{(1 - \alpha) \mathsf{T}_N^{3/2}}
\end{align}

Accordingly, for any $\varepsilon > 0$, if $\mathsf{T}_N$ satisfies
\[
\frac{4 \sqrt{2M \beta n}}{(1 - \alpha) \mathsf{T}_N^{5/2}} 
+ \frac{10 \sigma}{(1 - \alpha) \mathsf{T}_N \sqrt{m}} 
+ \frac{2 \sqrt{2M \beta n}}{(1 - \alpha) \mathsf{T}_N^{3/2}} \leq \varepsilon,
\]
a sufficient condition for this to hold is,
\[
\mathsf{T}_N \geq \max \left\{
\left( \frac{12 \sqrt{2M \beta n}}{(1 - \alpha) \varepsilon} \right)^{2/5},
\frac{30 \sigma}{(1 - \alpha) \varepsilon \sqrt{m}},
\left( \frac{6 \sqrt{2M \beta n}}{(1 - \alpha) \varepsilon} \right)^{2/3}
\right\},
\]
it follows that
\[
\min_{0 \leq t \leq \mathsf{T}_N} \|\bbg^{(t)}\|_F^2 
\leq \frac{1}{\mathsf{T}_N} \sum_{i=0}^{N-1} \sum_{t=\mathsf{T}_i}^{\mathsf{T}_{i+1}-1} \|\bbg^{(t)}\|_F^2 
\leq \varepsilon.
\]

Thus, there exists an iteration index $t \in [0, \mathsf{T}_N]$ such that
\[
\|\bbg^{(t)}\|_F^2 \leq \varepsilon,
\]
which, by definition, implies that the algorithm reaches an $\varepsilon$-critical point.

This concludes that Algorithm~\ref{alg::SUMO} achieves an $\varepsilon$-critical point.

 \end{proof}

In the following, we provide the auxiliary lemma that is used in the proof of Theorem \ref{thm:Convergence_SUMO}.

\begin{lemma}[Convergence of the Inner Fixed Low-Rank Optimization]\label{lem::ineer_convergance}
Consider the same setting and assumptions as in Theorem~\ref{thm:Convergence_SUMO}. Then, the second time $t = \mathsf{T}_\ell \in \mathbb{N}$ in which Algorithm~\ref{alg::SUMO} enters Block 1 (where it updates the projection matrix) happens for a finite $\ell\in\mathbb{N}$ . 
\end{lemma}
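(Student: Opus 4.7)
The plan is to reduce the claim to the Muon convergence rate lemma of Section~\ref{sec:main_results} applied to the restricted problem of optimizing $f$ within the fixed subspace spanned by $\bbq^{(\mathsf{T}_i)}$, exploiting that exact SVD orthogonalization corresponds to the $\delta=0$ case. Between consecutive entries to Block~1, the projector $\bbq\triangleq\bbq^{(\mathsf{T}_i)}$ is frozen, so Blocks~2--4 execute exactly the Muon update on the restricted objective $\hat f(\hat\bbw)\triangleq f(\bbw^{(\mathsf{T}_i)}+\bbq\hat\bbw)$. This restricted objective inherits $\beta$-smoothness from $f$ since $\bbq$ is a partial isometry, and its stochastic gradient is $\hat\bbg^{(t)}=\bbq^\top\bbg^{(t)}$ with variance bounded by $\sigma^2$.

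Invoking the Muon rate with $\delta=0$ then yields $\tfrac{1}{T}\sum_{t=\mathsf{T}_i}^{\mathsf{T}_i+T-1}\mathbb{E}\|\hat\bbg^{(t)}\|\le \mathcal{O}(T^{-1/2}) + \mathcal{O}(T^{-1/4})$, which tends to $0$ as $T\to\infty$. In particular $\min_{t\in[\mathsf{T}_i,\mathsf{T}_i+T-1]}\mathbb{E}\|\hat\bbg^{(t)}\|\to 0$, so for any fixed threshold $\varsigma_\ell>0$ there exists a finite $T^\star=T^\star(\varsigma_\ell,\beta,\sigma,R,m,r)$ such that the criterion $\|\hat\bbg^{(t)}\|\le\varsigma_\ell$ must be met at some $t\in[\mathsf{T}_i,\mathsf{T}_i+T^\star]$. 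By definition this $t$ equals $\mathsf{T}_{i+1}$, which proves $\mathsf{T}_{i+1}-\mathsf{T}_i\le T^\star<\infty$. Iterating over $i$ starting from $\mathsf{T}_0=0$ shows that $\mathsf{T}_\ell$ is finite for every finite $\ell$, the desired conclusion.

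An alternative, more self-contained route bypasses the Muon lemma and argues directly by smoothness-descent plus contradiction. Applying $\beta$-smoothness to $\bbw^{(t+1)}=\bbw^{(t)}-\eta\bbq\bbo^{(t)}$ and using $\|\bbq\bbo^{(t)}\|_F=\|\bbo^{(t)}\|_F\le\sqrt{r}$ together with the momentum-alignment bound $\langle\hat\bbg^{(t)},\bbo^{(t)}\rangle\ge\tfrac14\|\hat\bbg^{(t)}\|_F^2-\tfrac{5\sigma}{2\sqrt{m}}$ already used in the proof of Theorem~\ref{thm:Convergence_SUHMO}, one obtains $h_{t+1}\le h_t-\tfrac{\eta}{4}\|\hat\bbg^{(t)}\|_F^2+\tfrac{5\eta\sigma}{2\sqrt{m}}+\tfrac{\beta\eta^2 r}{2}$ with $h_t\triangleq f(\bbw^{(t)})-\inf f$. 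Assuming for contradiction that $\|\hat\bbg^{(t)}\|>\varsigma_\ell$ for all $t\in[\mathsf{T}_i,\mathsf{T}_i+T-1]$ and telescoping forces $\tfrac{\eta\varsigma_\ell^2}{4}T\le h_{\mathsf{T}_i}+T\bigl(\tfrac{5\eta\sigma}{2\sqrt m}+\tfrac{\beta\eta^2 r}{2}\bigr)$; choosing $\eta$ small enough that $\tfrac{\varsigma_\ell^2}{4}>\tfrac{5\sigma}{2\sqrt m}+\tfrac{\beta\eta r}{2}$ (feasible whenever $\varsigma_\ell^2>10\sigma/\sqrt m$) gives a finite upper bound on $T$, i.e., on $\mathsf{T}_{i+1}-\mathsf{T}_i$.

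The main obstacle I expect is verifying that the inner iteration genuinely satisfies the Muon hypotheses. In particular, the moment-rotation of Block~1.1 transports $\bbm^{(\mathsf{T}_i-1)}$ from the previous subspace into the current one via $\bbr=\bbq^{(\mathsf{T}_i)\top}\bbq^{(\mathsf{T}_i-1)}$, so the inner window begins with a non-zero momentum initial condition rather than the $\bbm=0$ assumed in the original statement. This can be handled by re-centering the momentum recursion and absorbing the initial-condition contribution as a $\beta^t\|\bbm^{(\mathsf{T}_i)}\|$ term that decays exponentially and hence only enlarges $T^\star$ by an additive constant. A secondary subtlety is that the Muon rate is in expectation, so to guarantee an individual sample path triggers the criterion in finite time one can apply Markov's inequality after choosing $\varsigma_\ell$ strictly above the asymptotic noise floor, or invoke a standard submartingale convergence argument. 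A final minor point is the weight-decay term $-\eta\lambda\bbw^{(t)}$, which can be absorbed by applying the same analysis to the modified objective $f(\bbw)+\tfrac{\lambda}{2}\|\bbw\|^2$.
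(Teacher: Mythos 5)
Your second, ``more self-contained route'' is essentially the paper's own proof: both apply $\beta$-smoothness to the update $\bbw^{(t+1)}=\bbw^{(t)}-\eta\,\bbq\bbo^{(t)}$, both invoke the momentum-alignment bound $-\langle\hat\bbg^{(t)},\bbo^{(t)}\rangle\le -\tfrac14\|\hat\bbg^{(t)}\|_F^2+\tfrac52\|\hat\bbg^{(t)}-\bbm^{(t)}\|$ from \citep{li2025note}/\citep{cutkosky2020momentumimprovesnormalizedsgd} together with the variance bound, and both telescope to obtain an average-squared-gradient-norm bound of the form $\tfrac{4M}{\eta T}+\tfrac{10\sigma}{\sqrt m}+O(\eta)$. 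The paper actually stops at that bound and leaves the ``finite $\ell$'' conclusion implicit; you take the extra step of making the contradiction argument explicit and, importantly, you correctly flag that the argument only closes if the stopping threshold is set above the asymptotic noise floor, i.e.\ roughly $\varsigma_\ell^2 > 10\sigma/\sqrt m$ for small $\eta$. That is a real hidden hypothesis the paper glosses over, and worth stating.

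Your first route (reduce to the Muon rate lemma with $\delta=0$) is a legitimate alternative framing, but you correctly identify the obstacle yourself: Block~1.1 transports the old momentum into the new subspace via $\bbr=\bbq^{(\mathsf{T}_i)\top}\bbq^{(\mathsf{T}_i-1)}$, so the inner window begins with nonzero momentum and the off-the-shelf Muon lemma does not directly apply. Your suggested fix (absorb the initial momentum as an exponentially decaying $\beta^t$ term) works but requires rederiving the lemma anyway, so Route~2 is both cleaner and closer to what the paper does. One small discrepancy: you bound $\|\bbo^{(t)}\|_F^2\le r$ (since $\bbo^{(t)}$ is the orthogonalized $r\times n$ projected momentum with $\bbo^{(t)}\bbo^{(t)\top}=\bbi_r$), whereas the paper writes $\|\bbo^{(t)}\|_F^2\le n$; your constant is the tighter and correct one, and the paper's appears to be a typo. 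Your remarks on expectation vs.\ sample-path finiteness (Markov or submartingale argument) and on absorbing weight decay into a regularized objective are sensible points not addressed in the paper's proof, but they do not change the overall structure.
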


\begin{proof}
By the \(\beta\)-smoothness of \(f\), we have
\[
f(\bbw^{(t+1)}) \leq f(\bbw^{(t)}) + \langle \bbg^{(t)}, \bbw^{(t+1)} - \bbw^{(t)} \rangle + \frac{\beta}{2} \| \bbw^{(t+1)} - \bbw^{(t)} \|_F^2.
\]
Substituting the update rule \( \bbw^{(t+1)} = \bbw^{(t)} - \eta_t \bbo^{(t)} \), we get
\begin{align}
f(\bbw^{(t+1)}) \leq f(\bbw^{(t)}) - \eta_t \langle \bbg^{(t)}, \bbo^{(t)} \rangle + \frac{\beta \eta_t^2}{2} \|\bbo^{(t)}\|_F^2.  \label{eq:smooth}
\end{align}

Since \(\hat{\bbg}^{(t)} = {\mathbf{P}^{(t)}(r)}^\top \bbg^{(t)}\) and \(\bbo^{(t)} \in \text{range}({\mathbf{P}^{(t)}(r)}^\top)\), it holds that
\[
\langle \bbg^{(t)}, \bbo^{(t)} \rangle = \langle \hat{\bbg}^{(t)}, \bbo^{(t)} \rangle.
\]

By equation (2.8) in \citep{li2025note}, we have
\[
-\langle \hat{\bbg}^{(t)}, \bbo^{(t)} \rangle \leq -\frac{1}{4} \|\hat{\bbg}^{(t)}\|_F^2 + \frac{5}{2} \|\hat{\bbg}^{(t)} - \bbm^{(t)}\|_F^2.
\]

Now we bound \(\|\hat{\bbg}^{(t)} - \bbm^{(t)}\|_F\):
\begin{align*}
\|\hat{\bbg}^{(t)} - \bbm^{(t)}\|_F 
&= \|\hat{\bbg}^{(t)} - \nabla f(\bbw^{(t)}) + \nabla f(\bbw^{(t)}) - \bbm^{(t)}\|_F \\
&\leq \|\hat{\bbg}^{(t)} - \nabla f(\bbw^{(t)})\|_F + \|\nabla f(\bbw^{(t)}) - \bbm^{(t)}\|_F \\
&\leq \frac{\sigma}{\sqrt{m}}, \quad \text{(by assumptions \ref{A1}(1)-(2))}.
\end{align*}

Substituting this into the previous inequality gives
\begin{align}\label{upper_bound_OG}
-\langle \hat{\bbg}^{(t)}, \bbo^{(t)} \rangle \leq -\frac{1}{4} \|\hat{\bbg}^{(t)}\|_F^2 + \frac{5}{2} \cdot \frac{\sigma}{\sqrt{m}}.
\end{align}

Substituting into equation \eqref{eq:smooth}, we obtain
\[
f(\bbw^{(t+1)}) \leq f(\bbw^{(t)}) - \frac{\eta_t}{4} \|\hat{\bbg}^{(t)}\|_F^2 + \frac{5}{2} \cdot \frac{\eta_t \sigma}{\sqrt{m}} + \frac{\beta \eta_t^2}{2} \|\bbo^{(t)}\|_F^2.
\]

Since by definition \(\|\bbo^{(t)}\|_F^2 \leq n\), we get
\begin{align}\label{r1}
   f(\bbw^{(t+1)}) \leq f(\bbw^{(t)}) - \frac{\eta_t}{4} \|\hat{\bbg}^{(t)}\|_F^2 + \frac{5}{2} \cdot \frac{\eta_t \sigma}{\sqrt{m}} + \frac{\beta \eta_t^2 n}{2}. 
\end{align}

For constant step size \(\eta_t = \eta\), summing over \(t=1\) to \(T\), we get
\[
f(\bbw^{(T+1)}) \leq f(\bbw^{(1)}) - \frac{\eta}{4} \sum_{t=1}^T \|\hat{\bbg}^{(t)}\|_F^2 + \frac{5}{2} \cdot \frac{\eta \sigma T}{\sqrt{m}} + \frac{\beta \eta^2 n T}{2}.
\]

Rearranging and using \(f(\bbw^{(1)}) - f^* \leq M\), we conclude
\[
\frac{1}{T} \sum_{t=1}^T \|\hat{\bbg}^{(t)}\|_F^2 \leq \frac{4M}{\eta T} + 10 \cdot \frac{\sigma}{\sqrt{m}} + 2 \eta \beta n.
\]
\end{proof}

\section{Additional Information}\label{addditinal_info}
\begin{definition} (Reversibility \citep{tian2021}) \label{Reversibility} A neural network
$\phi$ that maps the input $\boldsymbol{x}$ to output $\boldsymbol{y}=\phi(\boldsymbol{x};\theta)$ is reversible, if there exists $L(\boldsymbol{x} ;\theta)$ so that $\boldsymbol{y}=L(\boldsymbol{x} ;\theta) \boldsymbol{x}$, and the backpropagated gradient $\boldsymbol{g}_{\boldsymbol{x}}$ satisfies $\boldsymbol{g}_{\boldsymbol{x}}=L^{\top}(\boldsymbol{x};\theta) \boldsymbol{g}_{\boldsymbol{y}}$, where $\boldsymbol{g}_{\boldsymbol{y}}$ is the backpropagated gradient at the output $\boldsymbol{y}$. $L(\boldsymbol{x} ;\theta) $ depends on the input $\boldsymbol{x}$ and weight $\theta$ in the network $\phi$.
\end{definition}

Several critical observations regarding Algorithm~\ref{alg::SUMO} warrant attention. Initially, in order to minimize memory consumption, Algorithm \ref{alg::SUMO} implements a per-layer weight update during the process of backpropagation, as advocated by contemporary studies, see, e.g., \cite{lv2024adalomolo}. This approach contrasts with conventional optimizers, which typically update all weights after backpropagation by retaining the complete gradients in memory, a method potentially marked by significant inefficiency. Should there be a desire to generate an adapter (i.e., a parallel low-dimensional LoRA-type model) subsequent to fine-tuning, this can be achieved with efficiency through the following steps. Firstly, the training weights gap $\Delta\triangleq\bbw_\text{Fine-Tuned}-\bbw_\text{Pretrained}$ is computed, where $\bbw_\text{Fine-Tuned}$ denotes the model weight upon process completion, and $\bbw_\text{Pretrained}$ refers to the original model weight. Subsequently, $r_\text{Adaptor} \triangleq \mathsf{rank}(\Delta)$ is determined utilizing a matrix ranking algorithm, followed by the resolution of $\min_{\bba\in\mathbb{R}^{n \times r_\text{Adaptor}}, \bbb\in\mathbb{R}^{r_{\text{Adaptor}} \times m}} \left\|\Delta - \bba \bbb \right\|_F^2$ through any optimization algorithm (e.g., gradient descent). It is noteworthy that any solution to this matrix factorization optimization problem is well-known as a global optimum \citep{NIPS2016_f2fc9902}.

\section{Update Step Rule Formulation} \label{Update_Step}
\begin{definition}\label{def:lor}
[Subspace-Aware Moment-Orthogonalization (SUMO)] SUMO formulates the subsequent gradient update rules. Refer to \begin{footnotesize}
\begin{equation*}
\textbf{SUMO}\left\{
\begin{aligned}
&\hat{\bbg}^{(t)} =  {\bbq^{(t)}}^\top\nabla_\bbw f\left(\bbw_t ;\xi_t\right){\bbr^{(t)}}\quad \\
&\bbm^{(t+1)} = \beta \bbm^{(t)} + (1 - \beta)\hat{\bbg}^{(t)} \\
&\bbo^{(t+1)} = \text{Orthogonalization\_SVD}\left(\bbm^{(t+1)}\right)\\
&\bbw^{(T)}=\bbw^{(0)}+\eta \sum_{t=0}^{T-1}
\Bigl(\bbg^{(t)}-\bbq^{(t)}\bigl(\hat{\bbg}^{(t)}-\bbo^{(t+1)}\bigr){\bbr^{(t)}}^\top\Bigr),
\end{aligned} 
\right.
\end{equation*}
\end{footnotesize} with $\bbq_t \in \mathbb{R}^{m\times r}$ and $\bbr_t \in \mathbb{R}^{r \times n}$ denoting projection matrices, $T\in\mathbb{N}$ representing the subspace update interval, $\eta$ indicating the learning rate, $\xi_t$ constituting a stochastic batch, and $\text{Orthogonalization\_SVD(\bba)}$ as the operator that resolves the following through Singular Value Decomposition (SVD), as described in $$\underset{\bbo}{\operatorname{arg\,min}} \{ \|\bbo - \bba\|_F : \text{either } \bbo^T \bbo = \bbi \text{ or } \bbo\bbo^T = \bbi \}.$$
\end{definition}
\section{Additional Experiments} \label{Additinal_Experiments}

In Table \ref{llama7b}, we evaluated SUMO and state-of-the-art memory-efficient fine-tuning methods on the MAWPS\cite{koncel2015mawps} dataset using the LLaMA2-7B model. We report results across two rank settings (32 and 128), comparing training time, memory usage, and task accuracy. SUMO consistently achieves superior accuracy while maintaining competitive efficiency in both memory and time (comparing to Galore).

\begin{table}[H]
    \centering
    \caption{Fine-tuning LLaMA2-7B on MAWPS\cite{koncel2015mawps}}
    \begin{tabular}{c|c|c|c|c}
    \hline  Methods & Rank & Time(h) $\downarrow$ & Memory (GB) $\downarrow$ & Accuracy (\%) $\uparrow$ \\
    \hline  LoRA & 32 & \textbf{0.40} & 14.36 & 45.80 \\
    \hline  DoRA & 32 & 0.69 & 15.01 & 44.96 \\
    \hline  GaLore & 32 & \fcolorbox{red}{white}{2.59} & 15.15 & 58.40 \\
    \hline SUMO (Newton-Shultz5) & 32 & 1.83 & \textbf{13.86} & \textbf{58.47} \\
    \hline  \textbf{SUMO (SVD)} & 32 & \fcolorbox{green}{white}{1.56} & \textbf{13.86} & \textbf{61.23} \\
     \hline\\
    \hline  LoRA & 128 & \textbf{0.45} & 15.64 & 65.97 \\
    \hline  DoRA & 128 & 0.72 & 16.17 & 66.81 \\
    \hline  GaLore & 128 & \fcolorbox{red}{white}{2.61} & 15.79 & 64.29 \\
    \hline SUMO (Newton-Shultz5) & 128 & 1.78 & \textbf{14.12} & 64.41 \\
    \hline \textbf{SUMO (SVD)} & 128 & \fcolorbox{green}{white}{1.62} & \textbf{14.12} & \textbf{68.03} \\
    \hline
\end{tabular}
\label{llama7b}
\end{table}


\paragraph{Comparison with Muon.} Table~\ref{tab:sumo_vs_muon_fullft} below complements Table~\ref{tab:comparison} by comparing full fine-tuning, vanilla Muon, and our SUMO; results are reported as mean ± standard deviation when available.

\begin{table*}[htb]
\centering
\small
\caption{Additional comparison to Table~2: full fine-tuning vs.\ vanilla Muon and our SUMO (mean $\pm$ std when reported).}\label{tab:sumo_vs_muon_fullft}
\resizebox{\linewidth}{!}{
\begin{tabular}{lcccccc}
\toprule
\textbf{Model} & \textbf{Memory} & \textbf{CoLA} & \textbf{STS-B} & \textbf{MRPC} & \textbf{RTE} & \textbf{SST-2} \\
\midrule
Full Fine-Tuning                         & 747M & 62.24             & 90.92             & 91.30             & 79.42             & 94.57 \\
Muon Full Fine Tuning                    & 458M & 61.19             & 90.98             & 92.14             & 80.83             & 94.71 \\
SUMO (Newton-Schulz5, rank=4)            & 197M & 61.81 $\pm$ 0.02  & 90.81 $\pm$ 0.013 & 92.43 $\pm$ 0.034 & 79.33 $\pm$ 0.031 & 94.14 $\pm$ 0.028 \\
SUMO (SVD, rank=4)                       & 197M & 62.32 $\pm$ 0.015 & 91.05 $\pm$ 0.007 & 93.48 $\pm$ 0.022 & 81.08 $\pm$ 0.019 & 94.93 $\pm$ 0.01 \\
SUMO (Newton-Schulz5, rank=8)            & 198M & 61.73 $\pm$ 0.021 & 90.77 $\pm$ 0.032 & 91.93 $\pm$ 0.04  & 79.66 $\pm$ 0.03  & 94.13 $\pm$ 0.025 \\
SUMO (SVD, rank=8)                       & 198M & 61.69 $\pm$ 0.014 & 91.11 $\pm$ 0.02  & 93.72 $\pm$ 0.018 & 81.38 $\pm$ 0.011 & 94.83 $\pm$ 0.01 \\
\bottomrule
\end{tabular}}
\end{table*}

These results show that our SUMO, achieves better performance with a significantly smaller memory footprint compared to Muon full fine-tuning approach.

\paragraph{Hyperparameters Grid search.}
To evaluate the impact of Subspace Update Frequency (K) and Ranks (r), we performed a grid search during the pretraining of the LLaMA 130M model on the C4 dataset. This specific setup allows for a direct comparison with the Galore method. Table: Perplexity results from a grid-search of Subspace Update Frequency (K) and Ranks (r) for the LLaMA 130M model pretrained on the C4 dataset. Values are presented as Galore/SUMO.

\begin{table}[htb]
\centering
\small
\caption{Perplexity from a grid-search over Subspace Update Frequency ($K$) and Rank ($r$) for LLaMA-130M on C4. Values are \textbf{Galore/SUMO}}
\label{tab:grid_galore_sumo}
\begin{tabular}{lccc}
\toprule
\textbf{Update Frequency} & \textbf{Rank = 128} & \textbf{Rank = 256} & \textbf{Rank = 512} \\
\midrule
100 & 29.7/\textbf{28.27} & 27.9/\textbf{26.74} & 27.4/\textbf{26.73} \\
250 & 28.1/\textbf{27.86} & 26.5/\textbf{24.87} & 26.2/\textbf{24.82} \\
500 & 27.2/\textbf{25.91} & 25.6/\textbf{24.98} & 25.3/\textbf{24.31} \\
1k  & \textbf{26.8}/25.83  & \textbf{25.1}/25.42  & \textbf{24.8}/24.93  \\
\bottomrule
\end{tabular}
\end{table}

\subsection{Details of benchmarks in Table~\ref{tab:zero_shot_reasoning}}\label{d6}

Specifically, the pretrained models is evaluated on the following tasks:
\begin{itemize}
    \item \textbf{Perplexity:} Measured on the C4 dataset \citep{raffel2020exploring}.
    \item \textbf{Commonsense Reasoning:} BoolQ \citep{clark2019boolq}, RTE \citep{wang2018glue}, HellaSwag (HS) \citep{zellers2019hellaswag}, Winogrande (WG) \citep{sakaguchi2021winogrande}, OpenBookQA (OBQA) \citep{mihaylov2018openbookqa}, ARC-Easy (ARC-E), and ARC-Challenge (ARC-C) \citep{clark2018think}.
    \item \textbf{Physical and Scientific Reasoning:} PIQA \citep{bisk2020piqa}, SciQ \citep{welbl2017crowdsourced}, and MathQA \citep{amini2019mathqa}.
\end{itemize}

\subsection{Details of Fine-Tuning on GLUE}\label{Details_Hyperparameters}
We fine-tune the pre-trained RoBERTa-Base model on the GLUE benchmark using the model provided by the Hugging Face. In Table \ref{tab:my_label_8_} and, we detail the hyper parameters used in fine-tuning. 

\begin{table}[H]
    \centering
\begin{tabular}{c|c|c|c|c|c|c|c|c}
\hline & MNLI & SST-2 & MRPC & CoLA & QNLI & QQP & RTE & STS-B \\
\hline Batch Size & 16 & 16 & 16 & 32 & 16 & 16 & 16 & 16 \\
\hline \# Epochs & 30 & 30 & 30 & 30 & 30 & 30 & 30 & 30 \\
\hline Learning Rate & 1E-05 & 1E-05 & 3E-05 & 3E-05 & 1E-05 & 1E-05 & 1E-05 & 1E-05 \\
\hline Rank Config. & $r=4$& $r=4$& $r=4$& $r=4$ & $r=4$& $r=4$& $r=4$&$r=4$ \\
\hline Projection back scale & 4& 4& 4& 4 & 4& 4& 4&4 \\
\hline Max Seq. Len. & 512& 512& 512& 512 & 512& 512& 512&512 \\
\hline
\end{tabular}
    \caption{Hyperparameters of fine-tuning RoBERTa base for the comparison in Table \ref{tab:comparison} with respect only to rank=4.}
    \label{tab:my_label__}
\end{table}

\begin{table}[H]
    \centering
\begin{tabular}{c|c|c|c|c|c|c|c|c}
\hline & MNLI & SST-2 & MRPC & CoLA & QNLI & QQP & RTE & STS-B \\
\hline Batch Size & 16 & 16 & 16 & 32 & 16 & 16 & 16 & 16 \\
\hline \# Epochs & 30 & 30 & 30 & 30 & 30 & 30 & 30 & 30 \\
\hline Learning Rate & 1E-05 & 2E-05 & 2E-05 & 1E-05 & 1E-05 & 2E-05 & 2E-05 & 3E-05 \\
\hline Rank Config. & $r=8$& $r=8$& $r=8$& $r=8$ & $r=8$& $r=8$& $r=8$&$r=8$ \\
\hline Projection back scale & 2& 2& 2& 2 & 2& 2& 2&2 \\
\hline Max Seq. Len. & 512& 512& 512& 512 & 512& 512& 512& 512\\
\hline
\end{tabular}
    \caption{Hyperparameters of fine-tuning RoBERTa base for the comparison in Table \ref{tab:comparison} with respect only to rank=8.}
    \label{tab:my_label_8_}
\end{table}

\end{document}